\newcommand{\tinytodo}[2][]{\todo[size=\tiny]{#2}}
\newcommand{\todoc}[2][]{\tinytodo[color=blue!20, #1]{Cs: #2}} 
\newcommand{\todoy}[2][]{\tinytodo[color=red!20, #1]{Y:\@#2}} 
\newcommand{\Real}{\mathbb{R}}
\newcommand{\Ex}{\mathbb{E}}
\renewcommand{\Pr}{\mathbb{P}}
\DeclareMathOperator*{\argmax}{arg\,max}
\newcommand\given[1][]{\nonscript\:#1\vert\nonscript\:\mathopen{}\allowbreak}
\DeclarePairedDelimiter\paren\lparen\rparen
\DeclarePairedDelimiter\bracket\lbrack\rbrack
\DeclarePairedDelimiter\abs||
\DeclarePairedDelimiterX\set[1]\{\}{%
  \renewcommand\given{\nonscript\:\delimsize\vert\nonscript\:\mathopen{}\allowbreak}
  #1
}
\algnewcommand{\CommentLine}[1]{\Statex $\triangleright$ #1}
\newcommand{\bset}[1]{\left\{#1\right\}}
\newcommand{\eqn}[1]{\setlength{\abovedisplayskip}{0.15cm}\setlength{\belowdisplayskip}{0.15cm}\begin{align}#1\end{align}}
\newcommand{\eq}[1]{\setlength{\abovedisplayskip}{0.15cm}\setlength{\belowdisplayskip}{0.15cm}\begin{align*}#1\end{align*}}
\newcommand{\ind}[1]{\mathbb{1}\!\!\bset{#1}}
\renewcommand{\P}[1]{\mathbb{P}\left\{#1\right\}}
\newcommand{\iset}[1]{\left[#1 \right]}
\newcommand{\E}[1]{\mathbb{E}\left[ #1 \right]}
\theoremstyle{plain}
\newtheorem{theorem}{Theorem}
\newtheorem{corollary}[theorem]{Corollary}
\theoremstyle{definition}
\newtheorem{definition}[theorem]{Definition}
\newtheorem{remark}[theorem]{Remark}
\theoremstyle{remark}
\newcommand{\email}[1]{\small\href{mailto:#1}{\nolinkurl{#1}}}
\title{Conservative Bandits}
\DeclareRobustCommand{\authorlist}{
  \begin{tabular}[t]{cccc}
  Yifan Wu & Roshan Shariff & Tor Lattimore & Csaba Szepesv\'ari \\
  \email{ywu12@ualberta.ca} & \email{rshariff@ualberta.ca} &
  \email{tlattimo@ualberta.ca} & \email{szepesva@ualberta.ca} \\[2ex]
    \multicolumn{4}{c}{\textit{Department of Computing Science, University of Alberta}}
  \end{tabular}}
\author{\authorlist}
\date{}
\begin{document}

\maketitle{}

\begin{abstract}
  We study a novel multi-armed bandit problem that models the
  challenge faced by a company wishing to explore new strategies to
  maximize revenue whilst simultaneously maintaining their revenue
  above a fixed baseline, uniformly over time.  While previous work
  addressed the problem under the weaker requirement of maintaining
  the revenue constraint only at a given fixed time in the future, the
  algorithms previously proposed are unsuitable due to their design
  under the more stringent constraints.  We consider both the
  stochastic and the adversarial settings, where we propose, natural,
  yet novel strategies and analyze the price for maintaining the
  constraints.  Amongst other things, we prove both high probability
  and expectation bounds on the regret, while we also consider both
  the problem of maintaining the constraints with high probability or
  expectation.  For the adversarial setting the price of maintaining
  the constraint appears to be higher, at least for the algorithm
  considered.  A lower bound is given showing that the algorithm for
  the stochastic setting is almost optimal.  Empirical results
  obtained in synthetic environments complement our theoretical
  findings.
\end{abstract}

\section{Introduction}
The manager of Zonlex, a fictional company, has just learned about bandit algorithms and 
is very excited about the opportunity to use this advanced technology to maximize Zonlex's revenue by optimizing the content 
on the landing page of the company's website.
Every click on the content of their website pays a small reward; thanks to the high traffic
that Zonlex's website enjoys, this translates into a decent revenue stream. 
Currently, Zonlex chooses the website's contents using a strategy
designed over the years by its best engineers, but the manager suspects that some alternative strategies could potentially extract
significantly more revenue. The manager is willing to explore bandit algorithms to identify
the winning strategy.
The manager's problem is that Zonlex cannot afford to lose more than 10\% of its current revenue 
during its day-to-day operations and \emph{at any given point in time}, as
Zonlex needs a lot of cash to support its operations.
The manager is aware that standard bandit algorithms experiment
``wildly'', at least initially, and as such
may initially lose too much revenue and jeopardize the company's stable operations.
As a result, the manager is afraid of deploying cutting-edge bandit methods,
but notes that this just seems to be a chicken-and-egg problem:
a learning algorithm cannot explore due to the potential high loss, whereas it must explore to be good in the long run.

The problem described in the previous paragraph is ubiquitous.
It is present, for example, when attempting to learn better human-computer interaction
strategies, say
in dialogue systems or educational games. In these cases a designer may feel that experimenting with 
sub-par interaction strategies could cause more harm than good \citep[e.g.,][]{RieLe08:Dialogue,LiuMaBruPo14}.
Similarly, optimizing a production process in a factory via learning (and experimentation) has much potential
\citep[e.g.,][]{GaRie11:JobShopScheduling}, but deviating too much from established ``best
practices'' will often be considered too dangerous. 
For examples from other domains see the survey paper of \citet{GaFe15:SafeRL}.

Staying with Zonlex, the manager also knows that the standard practice in today's internet companies
is to employ A/B testing on an appropriately small percentage of the traffic for some period of time (e.g., 10\%
in the case of Zonlex).
The manager even thinks that perhaps a best-arm identification strategy from the bandit literature,
such as the recent  lil'UCB method of \citet{JaMaNoBu14:lilUCB}, could be more suitable.
While this is appealing, identifying the best possible option may need too much time even with a good 
learning algorithm
(e.g., this happens when the difference in payoff between the best and second best strategies is small).
One can of course stop earlier, but then the potential for improvement is wasted: 
when to stop then becomes a delicate question on its own.
As Zonlex only plans for the next five years anyway, they could adopt the
 more principled yet quite simple approach of first using their default favorite strategy until enough payoff is collected,
so that in the time remaining of the five years the return-constraint
is guaranteed to hold regardless of the future payoffs.
While this is a solution, the manager suspects that other approaches may exist.
One such potential approach is to discourage a given bandit algorithm from exploring the alternative options, 
while in some way encouraging its willingness to use the default option.
In fact, this approach has been studied recently by \citet{Lat15} (in a slightly more general setting than ours).
However, the algorithm of \citet{Lat15} cannot be guaranteed to maintain the return constraint \emph{uniformly in time}.
It is thus unsuitable for the conservative manager of Zonlex; a modification of the algorithm could possibly meet
this stronger requirement, but it appears that this will substantially increase the worst-case regret.

In this paper we ask whether better approaches than the above
naive one exist in the context of multi-armed bandits, and whether the existing approaches can achieve the best possible regret given the uniform constraint
on the total return.
In particular, our contributions are as follows:
\begin{enumerate*}[label=\textbf{(\roman*)}]
\item Starting from multi-armed bandits, we first formulate what we
  call the family of ``conservative bandit problems''.  As expected
  in these problems, the goal is to design learning algorithms that
  minimize regret under the additional constraint that at any given
  point in time, the total reward (return) must stay above a fixed
  percentage of the return of a fixed default arm, i.e., the return
  constraint must hold \emph{uniformly in time}.  The variants differ
  in terms of how stringent the constraint is (i.e., should the
  constraint hold in expectation, or with high probability?), whether
  the bandit problem is stochastic or adversarial, and whether the
  default arm's payoff is known before learning starts.
\item We analyze the naive build-budget-then-learn strategy described
  above (which we call BudgetFirst) and design a significantly better
  alternative for stochastic bandits that switches between using the
  default arm and learning \citep[using a version of UCB, a simple yet
  effective bandit learning algorithm:][]{AGR95,KR95,ACF02} in a
  ``smoother'' fashion.
\item We prove that the new
  algorithm, which we call Conservative UCB, meets the uniform return
  constraint (in various senses), while it can achieve significantly
  less regret than BudgetFirst.  In particular, while BudgetFirst is
  shown to pay a \emph{multiplicative penalty} in the regret for
  maintaining the return constraint, \todoc{We should argue that the
    penalty is indeed multiplicative, i.e., the upper bound is tight.}
  Conservative UCB only pays an \emph{additive penalty}.  We provide
  both high probability and expectation bounds, consider both high
  probability and expectation constraints on the return, and also
  consider the case when the payoff of the default arm is initially
  unknown.
\item We also prove a lower bound on the best regret given the
  constraint and as a result show that the additive penalty is
  unavoidable; thus Conservative UCB achieves the optimal regret in
  a worst-case sense.  While Unbalanced MOSS of \citet{Lat15}, when
  specialized to our setting, also achieves the optimal regret \citep[as
  follows from the analysis of][]{Lat15}, as mentioned earlier
  it does not maintain the constraint uniformly in time
  (it will explore too much at the beginning of time); it also
  relies heavily on the knowledge of the mean payoff of the default
  strategy.
\item We also consider the \emph{adversarial setting} where we design
  an algorithm similar to Conservative UCB\@: the algorithm uses an
  underlying ``base'' adversarial bandit strategy when it finds that
  the return so far is sufficiently higher than the minimum required
  return.  We prove that the resulting method indeed maintains the
  return constraint uniformly in time and we also prove a
  high-probability bound on its regret.  We find, however, that the
  additive penalty in this case is higher than in the stochastic case.
  Here, the Exp3-$\gamma$ algorithm of \citet{Lat15} is an
  alternative, but again, this algorithm is not able to maintain the
  return constraint uniformly in time.
\item The theoretical analysis is complemented by synthetic
  experiments on simple bandit problems whose purpose is to validate
  that the newly designed algorithm is reasonable and to show that the
  algorithms' behave as dictated by the theory developed. We also
  compare our method to Unbalanced MOSS to provide a perspective to
  see how much is lost due to maintaining the return constraint
  uniformly over time.  We also identify future work.  In particular,
  we expect our paper to inspire further works in related, more
  complex online learning problems, such as contextual bandits, or
  even reinforcement learning.
\end{enumerate*}



\subsection{Previous Work}
\label{sec:related}
Our constraint is equivalent to a constraint on the regret to a default strategy, or in the language of prediction-with-expert-advice,
or bandit literature, regret to a default action.
In the full information, mostly studied in the adversarial setting, much work has been devoted to understanding the price
of such constraints \citep[e.g.,]{HP05,EKMW08,Koo13,SNL14}. In particular, \citet{Koo13} studies the Pareto frontier
of regret vectors (which contains the non-dominated worst-case regret vectors of all algorithms).
The main lesson of these works is that in the full information setting 
even a constant regret to a fixed default action can be maintained with essentially no increase in the regret to the best action.
The situation quickly deteriorates in the bandit setting as shown by \citet{Lat15}. 
This is perhaps unsurprising given that, as opposed to the full information setting, 
in the bandit setting one needs to actively explore to get improved estimates
of the actions' payoffs.
As mentioned earlier, \citeauthor{Lat15} describes two learning algorithms relevant to our setting:
In the stochastic setting we consider, Unbalanced MOSS (and its relative, Unbalanced UCB) 
are able to achieve a constant regret penalty while maintaining the return constraint
while Exp3-$\gamma$ achieves a much better regret as compared to our strategy for the adversarial setting. \todoc{How much better?}
However, \emph{neither of these algorithms maintain the return constraint uniformly in time}.
Neither will the constraint hold with high probability. While Unbalanced UCB achieves problem-dependent bounds,
it has the same issues as Unbalanced MOSS with maintaining the return constraint.
Also, all these strategies rely heavily on knowing the payoff of the default action.

More broadly, the issue of staying safe while exploring has long been recognized in reinforcement learning (RL).
\citet{GaFe15:SafeRL} provides a comprehensive survey of the relevant literature.
Lack of space prevents us from including much of this review. However, the short summary is that while the issue
has been considered to be important, no previous approach addresses the problem from a theoretical angle.
Also, while it has been recognized that adding constraints on the return is one way to ensure safety, 
as far as we know, maintaining the constraints during learning (as opposed to imposing them as a way of restricting
the set of feasible policies) has not been considered in this literature.
Our work, while it considers a much simpler setting, suggest a novel approach to address the safe exploration
problem in RL. 

Another line of work considers safe exploration in the related context of optimization \citep{Sui2015}.
However, the techniques and the problem setting (e.g., objective)
in this work is substantially different from ours.


\section{Conservative Multi-Armed Bandits}
\label{sec:bandits}

The multi-armed bandit problem is a sequential decision-making task in
which a learning agent repeatedly chooses an action (called an
\emph{arm}) and receives a reward corresponding to that action.  We
assume there are $K+1$ arms and denote the arm chosen by the agent in
round $t\in\set{1,2,\dotsc}$ by $I_t\in\set{0,\dotsc,K}$.  There is a
reward $X_{t,i}$ associated with each arm $i$ at each round $t$ and the
agent receives the reward corresponding to its chosen arm,
$X_{t,I_t}$.  The agent does not observe the other rewards $X_{t,j}$
($j\neq I_t$).

The learning performance of an agent over a time horizon $n$ is
usually measured by its \emph{regret}, which is the difference between
its reward and what it could have achieved by consistently choosing
the single best arm in hindsight:
\begin{equation}
  \label{eq:regret}
  R_n = \max_{i \in\set{0,\dotsc,K}} \sum_{t=1}^n X_{t,i} - X_{t,I_t}.
\end{equation}
An agent is failing to learn unless its regret grows sub-linearly:
$R_n\in o(n)$; good agents achieve $R_n\in O(\sqrt{n})$ or even
$R_n\in O(\log n)$.

We also use the notation $T_i(n) = \sum_{t=1}^n\ind{I_t=i}$ for the
number of times the agent chooses arm $i$ in the first $n$ time steps.

\subsection{Conservative Exploration}

Let arm $0$ correspond to the conservative default action with the
other arms $1,\dotsc,K$ being the alternatives to be explored.  We
want to be able to choose some $\alpha>0$ and constrain the learner to
earn at least a $1-\alpha$ fraction of the reward from simply playing
arm 0:
\begin{equation}
  \label{eq:constraint}
  \sum_{s=1}^t X_{s,I_s} \geq (1-\alpha)\sum_{s=1}^t X_{s,0}
  \quad\text{for all } t\in\set{1,\dotsc,n}.
\end{equation}
For the introductory example above $\alpha=0.1$, which corresponds to
losing at most 10\% of the revenue compared to the default website.
It should be clear that small values of $\alpha$ force the learner to
be highly conservative, whereas larger $\alpha$ correspond to a weaker
constraint.

We introduce a quantity $Z_n$, called the \emph{budget}, which
quantifies how close the constraint~\eqref{eq:constraint} is to being
violated:
\begin{equation}
  \label{eq:budget}
  Z_t = \sum_{s=1}^t X_{s,I_s} - (1-\alpha)X_{s,0};
\end{equation}
the constraint is satisfied if and only if $Z_t\geq 0$ for all
$t\in\set{1,\dotsc,n}$.  Note that the constraints must hold uniformly
in time.

Our objective is to design algorithms that minimize the
regret~\eqref{eq:regret} while simultaneously satisfying the
constraint~\eqref{eq:constraint}.  In the following sections, we will
consider two variants of multi-armed bandits: the stochastic setting
in \cref{sec:stochastic} and the adversarial setting in
\cref{sec:adversarial}.  In each case we will design algorithms that
satisfy different versions of the constraint and give regret
guarantees.

One may wonder: what if we only care about $Z_n\ge 0$ instead of
$Z_t\ge 0$ for all $t$. Although our algorithms are designed for
satisfying the anytime constraint on $Z_t$ our lower bound, which is
based on $Z_n\ge 0$ only, shows that in the stochastic setting we
cannot improve the regret guarantee even if we only want to satisfy
the overall constraint $Z_n\ge 0$.


\section{The Stochastic Setting}
\label{sec:stochastic}

\begin{figure}
  \begin{center}
    \begin{tikzpicture}[font=\small]
      \pgfmathsetmacro{\xlim}{8}
      \pgfmathsetmacro{\ylim}{6}
      \pgfmathsetmacro{\oneminusalpha}{0.6}
      \pgfmathsetmacro{\ta}{0.55}
      \pgfmathsetmacro{\tb}{0.70}
      \pgfmathsetmacro{\saferatio}{1.5}
      \pgfmathsetmacro{\unsaferatio}{0.5}
      \draw[thick, ->] (0,0) -- (0,\ylim) node[anchor=west]{Cumulative reward};
      \draw[thick, ->] (0,0) -- (\xlim,0) node[anchor=north east]{Rounds};
      
      \draw[semithick, dotted] (\ta*\xlim,\ylim) -- (\ta*\xlim,0) node[anchor=north]{$t-1$};
      \draw[semithick, dotted] (\tb*\xlim,\ylim) -- (\tb*\xlim,0) node[anchor=north]{$t$};
      
      \draw[very thick, dashed,color=red!80!black] (0,0) -- node[sloped, anchor=center,
      below, pos=0.3]{\textbf{Constraint:} $(1-\alpha)\mu_0t$} (\xlim,\ylim*\oneminusalpha);
      
      \draw[ultra thick, dotted, color=green!80!black] (0,0) -- node[sloped, anchor=center,
      above, very near end]{\textbf{Default action:} $\mu_0t$} (\xlim,\ylim);
      
      \draw[ultra thick, color=blue] (0,0) -- node[sloped,
      anchor=center, above]{\textbf{Following default action}} (\ta*\xlim,\ta*\ylim);
      
      \draw[very thick, color=red!80!black] (\ta*\xlim,\ta*\ylim) --
      (\tb*\xlim,\tb*\ylim*\oneminusalpha*\unsaferatio) node[anchor=west]{Unsafe action};
      
      \draw[ultra thick, color=blue] (\ta*\xlim,\ta*\ylim) --
      (\tb*\xlim,\tb*\ylim*\oneminusalpha*\saferatio) node[anchor=west]{Safe action};
      
      \draw[ultra thick,|-|,dotted] (\ta*\xlim,\ta*\ylim) --
      node[midway, near end, anchor=east, text width=1.5cm,
      align=right]{$\widetilde{Z}_{t-1}$ Budget}
      (\ta*\xlim,\ta*\ylim*\oneminusalpha);
      
      \draw[ultra thick,|-|,dotted,color=blue] (\tb*\xlim,\tb*\ylim*\oneminusalpha*\saferatio) --
      node[midway, anchor=west]{$\widetilde{Z}_t>0$}
      (\tb*\xlim,\tb*\ylim*\oneminusalpha);
      
      \draw[ultra thick,|-|,dotted,color=red!80!black] (\tb*\xlim,\tb*\ylim*\oneminusalpha*\unsaferatio) --
      node[midway, anchor=west]{$\widetilde{Z}_t<0$}
      (\tb*\xlim,\tb*\ylim*\oneminusalpha);
    \end{tikzpicture}
  \end{center}
  \caption{\label{fig:budget}Choosing the default arm increases the
    budget.  Then it is safe to explore a non-default arm if it cannot
    violate the constraint (i.e.\ make the budget negative).}
\end{figure}
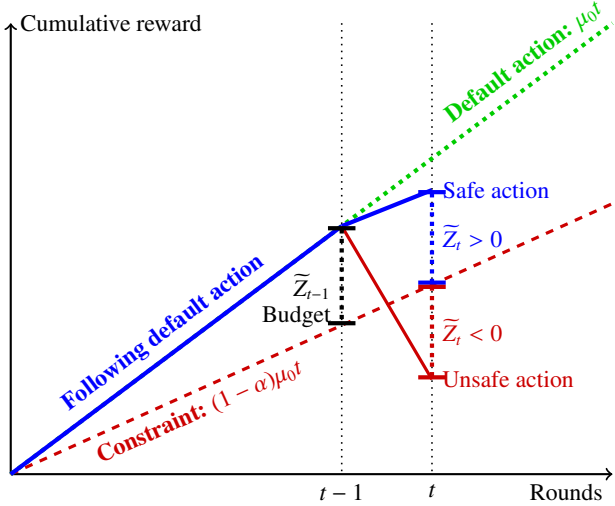

In the stochastic multi-armed bandit setting each arm $i$ and round
$t$ has a stochastic reward $X_{t,i} = \mu_i + \eta_{t,i}$, where
$\mu_i\in[0,1]$ is the expected reward of arm $i$ and the $\eta_{t,i}$
are independent random noise variables that we assume have
1\nobreakdash-subgaussian distributions.  We denote the expected
reward of the optimal arm by $\mu^*=\max_i \mu_i$ and the gap between
it and the expected reward of the $i$th arm by
$\Delta_i = \mu^* - \mu_i$.

The regret $R_n$ is now a random variable.  We can bound it in
expectation, of course, but we are often more interested in
high-probability bounds on the weaker notion of \emph{pseudo-regret:}
\begin{equation}
  \label{eq:pseudoregret}
  \widetilde{R}_n = n\mu^* - \sum_{t=1}^n \mu_{I_t} = \sum_{i=0}^K T_i(n)\Delta_i,
\end{equation}
in which the noise in the arms' rewards is ignored and the randomness
arises from the agent's choice of arm.  The regret $R_n$ and the
pseudo-regret $\widetilde{R}_n$ are equal in expectation.
High-probability bounds for the latter, however, can capture the risk
of exploration without being dominated by the variance in the arms'
rewards.

We use the notation
$\hat\mu_i(n) = \frac{1}{T_i(n)} \sum_{t=1}^n\ind{I_t=i} X_{t,i}$ for
the empirical mean of the rewards from arm $i$ observed by the agent
in the first $n$ rounds.  If $T_i(n)=0$ then we define
$\hat\mu_i(n) = 0$.  The algorithms for the stochastic setting will
estimate the $\mu_i$ by $\hat\mu_i$ and will construct and act based
on high-probability confidence intervals for the estimates.

\subsection{The Budget Constraint}

Just as we substituted regret with pseudo-regret, in the stochastic
setting we will use the following form of the
constraint~\eqref{eq:constraint}:
\begin{equation}
  \label{eq:stoch-constraint}
  \sum_{s=1}^t \mu_{I_s} \geq (1-\alpha)\mu_0 t
  \qquad\text{for all } t\in\set{1,\dotsc,n};
\end{equation}
the budget then becomes
\begin{equation}
  \label{eq:stoch-budget}
  \widetilde{Z}_t = \sum_{s=1}^t \mu_{I_s} - (1-\alpha)t\mu_0 \,.
\end{equation}
The default arm is always safe to play because it increases the budget
by $\mu_0 - (1-\alpha)\mu_0 = \alpha\mu_0$.  The budget will decrease
for arms $i$ with $\mu_i < (1-\alpha)\mu_0$; the constraint
$\widetilde{Z}_n\geq 0$ is then in danger of being violated
(\cref{fig:budget}).

In the following sections we will construct algorithms that satisfy
pseudo-regret bounds and the budget constraint~\eqref{eq:stoch-constraint} with high
probability $1-\delta$ (where $\delta>0$ is a tunable parameter).
In \cref{sec:expectation-bounds} we will see how these algorithms can
be adapted to satisfy the constraint in expectation and with bounds on
their expected regret.

For simplicity, we will initially assume that the algorithms know
$\mu_0$, the expected reward of the default arm.  This is reasonable
in situations where the default action has been used for a long time
and is well-characterized.  Even so, in \cref{sec:unknown-mu0} we will
see that having to learn an unknown $\mu_0$ is not a great hindrance.

\subsection{BudgetFirst --- A Naive Algorithm}

Before presenting the new algorithm it is worth remarking on the most
obvious naive attempt, which we call the BudgetFirst algorithm.  A
straightforward modification of UCB leads to an algorithm that accepts
a confidence parameter $\delta \in (0,1)$ and suffers regret at most
\begin{equation}
  \label{eq:high-prob-ucb}
  \widetilde{R}_n = O\paren*{\sqrt{Kn \log\left(\frac{\log(n)}{\delta}\right)}}
  = R_{\text{worst}}\,.
\end{equation}
Of course this algorithm alone will not satisfy the
constraint~\eqref{eq:stoch-constraint}, but that can be enforced by
naively modifying the algorithm to deterministically choose $I_t = 0$
for the first $t_0$ rounds where \eq{(\forall\, t_0 \leq t \leq n)
  \quad t \mu_0 - R_{\text{worst}} \geq (1 - \alpha) t \mu_0\,.  }
Subsequently the algorithm plays the high probability version of UCB
and the regret guarantee~\eqref{eq:high-prob-ucb} ensures the
constraint~\eqref{eq:stoch-constraint} is satisfied with high
probability. Solving the equation above leads to
$t_0 = \tilde O(R_{\text{worst}} / \alpha \mu_0)$, and since the
regret while choosing the default arm may be $O(1)$ the worst-case
regret guarantee of this approach is
\begin{equation*}
  \widetilde{R}_n =
  \Omega\left(\frac{1}{\mu_0 \alpha} \sqrt{Kn
      \log\left(\frac{\log(n)}{\delta}\right)}\right)\,.
\end{equation*}
This is significantly worse than the more sophisticated algorithm that
is our main contribution and for which the price of satisfying
\eqref{eq:stoch-constraint} is only an additive term rather than a
large multiplicative factor.

\subsection{Conservative UCB}

A better strategy is to play the default arm only until the
budget~\eqref{eq:stoch-budget} is large enough to start exploring
other arms with a low risk of violating the constraint.  It is safe to
keep exploring as long as the budget remains large, whereas if it
decreases too much then it must be replenished by playing the default
arm.  In other words, we intersperse the exploration of a standard
bandit algorithm with occasional budget-building phases when required.
We show that accumulating a budget does not severely curtail
exploration and thus gives small regret.

Conservative UCB (\cref{alg:cucb}) is based on UCB with the novel
twist of maintaining a positive budget.  In each round, UCB calculates
upper confidence bounds for each arm; let $J_t$ be the arm that
maximizes this calculated confidence bound.  Before playing this arm
(as UCB would) our algorithm decides whether doing so risks the budget
becoming negative.  Of course, it does not know the actual budget
$\widetilde{Z}_t$ because the $\mu_i$ ($i\neq 0$) are unknown;
instead, it calculates a lower confidence bound $\xi_t$
based on confidence intervals for the $\mu_i$.  More precisely, it
calculates a lower confidence bound for what the budget would be if it
played arm $J_t$.  If this lower bound is positive then the constraint
will not be violated as long as the confidence bounds hold.  If so,
the the algorithm chooses $I_t = J_t$ just as UCB would; otherwise it
acts conservatively by choosing $I_t = 0$.

\begin{algorithm}
\caption{Conservative UCB}\label{alg:cucb}
\begin{algorithmic}[1]
\State \textbf{Input: $K$, $\mu_0$, $\delta$, $\psi^\delta(\cdot)$}
\For{$t \in 1,2,\dotsc$}
\vspace{1ex}
\CommentLine{Compute confidence intervals\ldots}
\vspace{0.5ex}
\State $\theta_0(t), \lambda_0(t) \gets \mu_0$ \Comment{\ldots for known $\mu_0$,}
\For{$i \in 1,\dotsc,K$} \Comment{\ldots for other arms,}
\State $\Delta_i(t) \gets \sqrt{\psi^\delta(T_i(t-1))/T_i(t-1)}$
\State $\theta_i(t) \gets \hat \mu_i(t-1) + \Delta_i(t)$
\State $\lambda_i(t) \gets \max\bset{0, \hat \mu_i(t-1) - \Delta_i(t)}$
\EndFor
\State $J_t \gets \argmax_i \theta_i(t)$ \Comment{\ldots and find UCB
  arm.}
\vspace{1ex}
\CommentLine{Compute budget and\ldots}
\vspace{1ex}
\State $\xi_t \gets
\sum_{s=1}^{t-1} \lambda_{I_s}(t)
+ \lambda_{J_t}(t)
- (1 - \alpha)t \mu_0$
\If{$\xi_t \geq 0$}
\State $I_t \gets J_t$ \Comment{\ldots choose UCB arm if safe,}
\Else
\State $I_t \gets 0$ \Comment{\ldots default arm otherwise.}
\EndIf
\vspace{1ex}
\EndFor
\end{algorithmic}
\end{algorithm}

\begin{remark}[Choosing $\psi^\delta$]\label{remark:choosing-psi}
  The confidence intervals in \cref{alg:cucb} are constructed using
  the function $\psi^\delta$.  Let $F$ be the event that for all
  rounds $t\in\{ 1, 2, \ldots\}$ and every action $i\in \iset{K}$, the
  confidence intervals are valid:
  \begin{equation*}
    \abs{\hat\mu_i(t) - \mu_i} \le \sqrt{\frac{\psi^\delta(T_i(t))}{T_i(t)}}\,.
  \end{equation*}
  Our goal is to choose $\psi^\delta(\cdot)$ such that
  \begin{equation}
    \label{eq:hprob}
    \P{F}\ge 1-\delta\,.
  \end{equation} 
  A simple choice is
  \begin{align*}
    \psi^\delta(s)=2\log(Ks^3/\delta),
  \end{align*}
  for which~\eqref{eq:hprob} holds by Hoeffding's inequality and union
  bounds.  The following choice achieve better performance in practice:
  \begin{multline}\label{eq:psi}
    \psi^\delta(s) = \log \max\set{3, \log\zeta} + \log(2e^2 \zeta) \\
    + \frac{\zeta(1 + \log(\zeta))}{(\zeta - 1) \log(\zeta)} \log \log
    (1 + s),
  \end{multline}
  where $\zeta = K/\delta$; it can be seen to achieve~\eqref{eq:hprob}
  by more careful analysis motivated by \citet{Gar13}, %
\end{remark}

Some remarks on \cref{alg:cucb}
\begin{itemize}
\item $\mu_0$ is known, so the upper and lower confidence bounds can
  both be set to $\mu_0$ (line 3).  See \cref{sec:unknown-mu0} for a
  modification that learns an unknown $\mu_0$.
\item The $\max$ in the definition of the lower confidence bound
  $\lambda_i(t)$ (line 7) is because we have assumed $\mu_i \geq 0$
  and so the lower confidence bound should never be less than $0$.
\item $\xi_t$ (line 10) is a lower confidence bound on the
  budget~\eqref{eq:stoch-budget} if action $J_t$ is chosen.  More
  precisely, it is a lower confidence bound on
  \begin{align*}
    \widetilde{Z}_t &= \sum_{s=1}^{t-1} \mu_{I_s} + \mu_{J_t} - (1-\alpha)t\mu_0.
  \end{align*}
\item If the default arm is also the UCB arm ($J_t = 0$) and the
  confidence intervals all contain the true values, then
  $\mu^* = \mu_0$ and the algorithm will choose action $0$ for all
  subsequent rounds, incurring no regret.
\end{itemize}

The following theorem guarantees that Conservative UCB satisfies the
constraint while giving a high-probability upper bound on its regret.

\begin{restatable}{theorem}{thmupper}%
  \label{thm:upper}%
  In any stochastic environment where the arms have expected rewards
  $\mu_i\in[0,1]$ with 1-subgaussian noise, \cref{alg:cucb} satisfies
  the following with probability at least $1-\delta$ and for every
  time horizon $n$:
  \begin{align}
    \sum_{s=1}^t\mu_{I_s}
    &\ge (1-\alpha)\mu_0t
      \qquad\text{for all } t\in\set{1,\dotsc,n}, \tag{\ref{eq:stoch-constraint}}\\
    \widetilde{R}_n
    &\le \sum_{i>0:\Delta_i>0} \paren*{\frac{4L}{\Delta_i}+\Delta_i}
      + \frac{2(K+1)\Delta_0}{\alpha\mu_0} \nonumber\\
    &\qquad\qquad + \frac{6L}{\alpha\mu_0}
      \sum_{i=1}^K \frac{\Delta_0}{\max\{\Delta_i,\Delta_0-\Delta_i\}}, \label{eq:pd-upper} \\
    \widetilde{R}_n
    &\in O\paren*{\sqrt{nKL}+\frac{KL}{\alpha\mu_0}},
      \label{eq:pind-upper}
  \end{align}
  when $\psi^\delta$ is chosen in accordance with
  \cref{remark:choosing-psi} and where $L=\psi^\delta(n)$.
\end{restatable}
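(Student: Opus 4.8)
The plan is to condition throughout on the event $F$ of \cref{remark:choosing-psi}, on which all confidence intervals are valid and which has probability at least $1-\delta$; everything below is then a deterministic consequence of $F$. I would first dispatch the constraint \eqref{eq:stoch-constraint}. The key observation is that $\xi_t$ is a genuine lower bound on the post-play budget: since $\lambda_i(t)\le\mu_i$ on $F$ (with $\lambda_0(t)=\mu_0$ exact), whenever the algorithm commits to $I_t=J_t$ we have $\xi_t\le\widetilde{Z}_t$. I would then induct on $t$: if $I_t=0$ the budget rises by $\alpha\mu_0\ge0$ and preserves non-negativity, while if $I_t=J_t$ the algorithm acted only because $\xi_t\ge0$, giving $\widetilde{Z}_t\ge\xi_t\ge0$. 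This yields $\widetilde{Z}_t\ge0$ for all $t$, which is exactly \eqref{eq:stoch-constraint}.

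For the regret \eqref{eq:pd-upper} I would split $\widetilde{R}_n=\sum_{i=0}^K T_i(n)\Delta_i$ into the non-default arms and arm $0$. A useful preliminary reduction: because $\theta_0(t)=\mu_0$ carries no confidence width, on $F$ the default arm can be the UCB arm $J_t$ only when $\mu_0=\mu^*$, i.e.\ when $\Delta_0=0$. Hence whenever $\Delta_0>0$, every play of arm $0$ is a \emph{conservative} round ($J_t\neq0$ but $\xi_t<0$), so $T_0(n)$ equals the number of conservative rounds. For each non-default suboptimal arm $i$, the standard UCB argument applies verbatim to the rounds with $I_t=J_t=i$: if such a round occurs then $\mu^*\le\theta_i(t)\le\mu_i+2\Delta_i(t)$ on $F$, giving $T_i(n)\le 4L/\Delta_i^2+1$ and hence the $\sum_{i>0:\Delta_i>0}(4L/\Delta_i+\Delta_i)$ term.

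The crux is bounding the number $N=T_0(n)$ of conservative rounds, and I expect this to be the main obstacle. I would examine the last conservative round $t_N$, at which $T_0(t_N-1)=N-1$ and $\xi_{t_N}<0$. Expanding $\xi_{t_N}$ by arm, using $\lambda_0=\mu_0$ and $t_N=1+\sum_i T_i(t_N-1)$, rearranges $\xi_{t_N}<0$ into
\eq{
  (N-1)\,\alpha\mu_0 < (1-\alpha)\mu_0 + \sum_{i\ge1} T_i(t_N-1)\,\bigl((1-\alpha)\mu_0-\lambda_i(t_N)\bigr)^+ .
}
It then remains to bound each summand. On $F$ one has $\lambda_i(t)\ge\mu_i-2\Delta_i(t)$, so the positive part is at most $2\Delta_i(t_N)$ and the product is at most $2\sqrt{T_i(t_N-1)\,L}$. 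I would now feed in two competing ceilings on $T_i(t_N-1)$: the UCB ceiling $T_i\lesssim L/\Delta_i^2$, and the observation that the positive part is nonzero only while $2\Delta_i(t_N)>\mu_i-(1-\alpha)\mu_0$, i.e.\ while $T_i\lesssim L/(\mu_i-(1-\alpha)\mu_0)^2$. Taking the smaller of the two resulting bounds and using $\mu_i-(1-\alpha)\mu_0\ge\mu_i-\mu_0=\Delta_0-\Delta_i$ yields a per-arm contribution of order $L/\max\{\Delta_i,\Delta_0-\Delta_i\}$, checking separately the unsafe arms $\mu_i<(1-\alpha)\mu_0$ (where $\max\{\Delta_i,\Delta_0-\Delta_i\}=\Delta_i$ and only the UCB ceiling is needed) and the optimal arm $\Delta_i=0$ (where only the width-vanishing ceiling applies, with $\mu_i-(1-\alpha)\mu_0=\Delta_0+\alpha\mu_0\ge\Delta_0$). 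Dividing by $\alpha\mu_0$ and multiplying by $\Delta_0$ produces the two default-arm terms of \eqref{eq:pd-upper}, the $(K+1)$ factor and the $+\Delta_i$ pieces absorbing the $+1$, the initial per-arm play, and the $\sqrt{L}$ lower-order remainders left over from the ceilings.

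Finally, \eqref{eq:pind-upper} should follow from \eqref{eq:pd-upper} by routine worst-casing: split the non-default arms at a threshold $\Delta=\sqrt{KL/n}$, bounding $T_i\Delta_i\le n\Delta_i$ for $\Delta_i\le\Delta$ and $4L/\Delta_i\le 4L/\Delta$ otherwise to obtain the $O(\sqrt{nKL})$ term; for the default-arm terms use $\max\{\Delta_i,\Delta_0-\Delta_i\}\ge\Delta_0/2$ and $\Delta_0\le1$ to bound them by $O(KL/(\alpha\mu_0))$. The genuinely new work is confined to the conservative-round count; the two competing ceilings producing the $\max\{\Delta_i,\Delta_0-\Delta_i\}$ denominator are the delicate point, together with the edge cases $\Delta_i=0$ and the unsafe arms where one of the two ceilings is vacuous.
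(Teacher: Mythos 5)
Your proposal is correct and follows essentially the same route as the paper's proof: condition on $F$, argue the constraint via $\widetilde{Z}_t\ge\xi_t\ge 0$, bound $T_i(n)\le 4L/\Delta_i^2+1$ by the usual UCB argument, and bound $T_0(n)$ by rearranging $\xi_\tau<0$ at the last conservative round into a per-arm sum whose terms are controlled by a case split on the sign of $(1-\alpha)\mu_0-\mu_i$, yielding the $\max\{\Delta_i,\Delta_0-\Delta_i\}$ denominator. The only (immaterial) differences are that you bound the safe-arm terms via the positive part and the ``width must exceed the gap'' condition where the paper maximizes the quadratic $a_iT_i+\sqrt{LT_i}$ at its vertex, and you use a threshold split instead of Jensen's inequality for the $O(\sqrt{nKL})$ term; both give the same bounds up to constants.
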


Standard unconstrained UCB algorithms achieve a regret of order
$O(\sqrt{nKL})$; Theorem~\ref{thm:upper} tells us that the penalty our
algorithm pays to satisfy the constraint is an extra additive regret
of order $O(KL/\alpha\mu_0)$.

\begin{remark}\label{remark:small-alpha}
  We take a moment to understand how the regret of the algorithm
  behaves if $\alpha$ is polynomial in $1/n$.  Clearly if
  $\alpha\in O(1/n)$ then we have a constant exploration budget and
  the problem is trivially hard.  In the slightly less extreme case
  when $\alpha$ is as small as $n^{-a}$ for some $0<a<1$, the extra
  regret penalty is still not negligible: satisfying the constraint
  costs us $O(n^a)$ more regret in the worst case.
  
  We would argue that the problem-dependent regret
  penalty~\eqref{eq:pd-upper} is more informative than the worst case
  of $O(n^a)$; our regret increases by
  \[
    \frac{6L}{\alpha\mu_0} \sum_{i=1}^K
    \frac{\Delta_0}{\max\{\Delta_i,\Delta_0-\Delta_i\}}.
  \]
  Intuitively, even if $\alpha$ is very small, we can still explore as
  long as the default arm is close-to-optimal (i.e.\ $\Delta_0$ is
  small) and most other arms are clearly sub-optimal (i.e.\ the
  $\Delta_i$ are large).  Then the sub-optimal arms are quickly
  discarded and even the budget-building phases accrue little regret:
  the regret penalty remains quite small.  More precisely, if
  $\Delta_0 \approx n^{-b_0}$ and
  $\min_{i>0:\Delta_i>0} \Delta_i \approx n^{-b}$, then the regret
  penalty is
  \[
    O\left( n^{a+\min\{0,b-b_0\}} \right);
  \]
  small $\Delta_0$ and large $\Delta_i$ means $b-b_0<0$, giving a
  smaller penalty than the worst case of $O(n^a)$.
\end{remark}

\begin{remark}
Curious readers may be wondering if $I_t = 0$ is the only conservative choice when the arm proposed by UCB risks violating the constraint.
A natural alternative would be to use the lower confidence bound $\lambda_i(t)$ by choosing
\eqn{\label{eq:alternative}
I_t = \begin{cases}
J_t\,, & \text{if } \xi_t \geq 0\,; \\
\argmax_i \lambda_i(t)\,, & \text{otherwise}\,.
\end{cases}
}
It is easy to see that if $F$ does not occur, then choosing $\argmax_i \lambda_i(t)$ increases the budget at least as much
as choosing action $0$ while incurring less regret and so this algorithm is preferable to \cref{alg:cucb} in practice.
Theoretically speaking, however, it is possible to show that the improvement is by at most a constant factor so our analysis of the
simpler algorithm suffices. The proof of this claim is somewhat tedious so instead we provide two intuitions:
\vspace{-0.4cm}
\begin{enumerate}
\item The upper bound approximately matches the lower bound in the minimax regime, so any improvement must be relatively small in the minimax sense.
\item Imagine we run the unmodified \cref{alg:cucb} and let $t$ be the
  first round when $I_t \neq J_t$ and where there exists an
  $i > 0$ with $\lambda_i(t) \geq \mu_0$.  If $F$ does not hold, then
  the actions chosen by UCB satisfy \eq{T_i(t) \in
    \Omega\left(\min\bset{\frac{L}{\Delta_i^2}, \max_j
        T_j(t)}\right)\,, } which means that arms are being played in
  approximately the same frequency until they are proving suboptimal
  \citep[for a similar proof, see][]{Lat15b}.  From this it follows
  that once $\lambda_{I_t}(t) \geq \mu_0$ for some $i$ it will not be
  long before either $\lambda_j(t+s) \geq \mu_0$ or
  $T_j(t+s) \geq 4L/\Delta_i^2$ and in both cases the algorithm will
  cease playing conservatively. Thus it takes at most a constant
  proportion more time before the naive algorithm is exclusively
  choosing the arm chosen by UCB\@.
\end{enumerate}
\end{remark}

Next we discuss how small modifications to \cref{alg:cucb} allow it to
handle some variants of the problem while guaranteeing the same order of regret.

\subsection{Considering the Expected Regret and Budget}
\label{sec:expectation-bounds}

\todoy{this is for known time horizon, do we need to say something
  about making it anytime?}%
One may care about the performance of the algorithm in expectation
rather than with high probability, i.e.\ we want an upper bound on
$\E{\widetilde{R}_n}$ and the constraint~\eqref{eq:stoch-constraint}
becomes
\begin{equation}
  \label{eq:expt-bounds}
  \Ex\bracket[\Big]{\sum_{s=1}^t\mu_{I_s}}
  \ge (1-\alpha)\mu_0t, \quad
  \text{for all } t\in\set{1,\ldots,n}.
\end{equation}

We argued in \cref{remark:small-alpha} that if $\alpha\in O(1/n)$ then
the problem is trivially hard; let us assume therefore that
$\alpha\ge c/n$ for some $c>1$.  By running \cref{alg:cucb} with
$\delta=1/n$ and $\alpha'=(\alpha-\delta)/(1-\delta)$ we can achieve
\eqref{eq:expt-bounds} and a regret bound with the same order as in
\cref{thm:upper}.

To show
\eqref{eq:expt-bounds} we have
\begin{align*}
  \Ex\bracket[\Big]{\sum_{s=1}^t\mu_{I_s}}
  &\ge \P{F} \Ex\bracket[\Big]{\sum_{s=1}^t\mu_{I_s} \given[\Big] F}\\
  &\ge (1-\delta)(1-\alpha')\mu_0t
  = (1-\alpha)\mu_0t \,.
\end{align*}
In the upper bound of $\E{R_n}$, we have
\begin{align*}
  \E{R_n} \le \E{R_n|F}+\delta n = \E{R_n|F}+1 \,.
\end{align*}
$\E{R_n|F}$ can be upper bounded by \cref{thm:upper} with two
changes:
\begin{enumerate*}[label=(\roman*)]
\item $L$ becomes $O(\log nK)$ after replacing $\delta$ with
  $1/n$, and
\item $\alpha$ becomes $\alpha'$.
\end{enumerate*}
Since $\alpha'/\alpha \ge 1-1/c$ we get essentially the same order of
regret bound as in \cref{thm:upper}.

\subsection{Learning an Unknown $\mu_0$}
\label{sec:unknown-mu0}

Two modifications to \cref{alg:cucb} allow it to handle the case when
$\mu_0$ is unknown. First, just as we do for the non-default arms, we
need to set $\theta_0(t)$ and $\lambda_0(t)$ based on confidence
intervals.  Second, the lower bound on the budget needs to be set as
\begin{multline}
  \label{eq:unknown-mu0-budget}
  \xi'_t = \sum_{i=1}^K T_i(t-1)\lambda_i(t)+\lambda_{J_t}(t) \\ 
  + (T_0(t-1) - (1-\alpha)t)\theta_0(t) \,.
\end{multline}

\begin{restatable}{theorem}{thmunknownmuzero}\label{thm:unknown-mu0}
  \Cref{alg:cucb}, modified as above to work without knowing $\mu_0$
  but otherwise the same conditions as \cref{thm:upper}, satisfies
  with probability $1-\delta$ and for all time horizons $n$ the
  constraint~\eqref{eq:stoch-constraint} and the regret bound
  \begin{multline}
    \label{eq:unknown-mu0-pd-upper}
    \widetilde{R}_n
    \le \sum_{i:\Delta_i>0} \left(\frac{4L}{\Delta_i}+\Delta_i\right)
    + \frac{2(K+1)\Delta_0}{\alpha\mu_0} \\
    + \frac{7L}{\alpha\mu_0} \sum_{i=1}^K
    \frac{\Delta_0}{\max\{\Delta_i,\Delta_0-\Delta_i\}} \,.
  \end{multline}
\end{restatable}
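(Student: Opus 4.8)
The plan is to mirror the proof of \cref{thm:upper}, working throughout on the event $F$ of \cref{remark:choosing-psi}, now strengthened to also require that the confidence interval for arm $0$ be valid. By the same union bound this event still has probability at least $1-\delta$, and the two algorithmic modifications only affect the constraint verification and the budget-building accounting; everything else in the proof of \cref{thm:upper} transfers verbatim.

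\textbf{Constraint.} First I would check that playing $J_t$ whenever $\xi'_t \ge 0$ never violates \eqref{eq:stoch-constraint} on $F$. Writing the hypothetical budget after playing $J_t$ as
\[
  \widetilde{Z}_t = \sum_{i=1}^K T_i(t-1)\mu_i + \mu_{J_t} + \left(T_0(t-1) - (1-\alpha)t\right)\mu_0,
\]
I would compare it to $\xi'_t$ from \eqref{eq:unknown-mu0-budget} term by term. On $F$ we have $\lambda_i(t)\le\mu_i$ for every $i$, so the first two groups of terms of $\xi'_t$ underestimate their counterparts. The delicate term is the arm-$0$ contribution $(T_0(t-1)-(1-\alpha)t)\theta_0(t)$. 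When $T_0(t-1) < (1-\alpha)t$ its coefficient is negative and, since $\theta_0(t)\ge\mu_0$ on $F$, multiplying by the larger value $\theta_0(t)$ only lowers the product; hence $\xi'_t \le \widetilde{Z}_t$ and $\xi'_t\ge 0$ forces $\widetilde{Z}_t\ge 0$. When instead $T_0(t-1)\ge (1-\alpha)t$, every term of $\widetilde{Z}_t$ is non-negative and the constraint holds automatically, regardless of $\xi'_t$. This case split is exactly what motivates using the \emph{upper} bound $\theta_0(t)$ rather than $\lambda_0(t)$ in \eqref{eq:unknown-mu0-budget}.

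\textbf{Regret.} For the regret I would reuse the decomposition of \cref{thm:upper}, splitting the horizon into UCB rounds ($I_t = J_t$) and conservative budget-building rounds ($\xi'_t<0$, so $I_t=0\neq J_t$). The UCB rounds are handled by the standard optimistic analysis; the only change is that arm $0$ now carries a confidence interval of positive width and can be over-explored like any other arm, so it contributes its own $4L/\Delta_0 + \Delta_0$ summand. This is precisely why the first sum in \eqref{eq:unknown-mu0-pd-upper} runs over $\sum_{i:\Delta_i>0}$ rather than $\sum_{i>0:\Delta_i>0}$.

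\textbf{Main obstacle.} The crux, as in the known-$\mu_0$ case, is controlling the number of conservative rounds: each costs regret $\Delta_0$ and, on $F$, raises the true budget by $\alpha\mu_0$, so their number is governed by the total budget deficit generated during UCB rounds. I would bound this deficit by $O\left(L\sum_i 1/\max\{\Delta_i,\Delta_0-\Delta_i\}\right)$, where the denominator is the effective gap at which UCB stops treating arm $i$ as a threat to the constraint; dividing by $\alpha\mu_0$ and charging $\Delta_0$ per round reproduces the final sum in \eqref{eq:unknown-mu0-pd-upper}. The one genuinely new ingredient is that the slack $\widetilde{Z}_t-\xi'_t$ now carries the additional term $\left((1-\alpha)t-T_0(t-1)\right)\left(\theta_0(t)-\mu_0\right)$ from the uncertainty in $\mu_0$, forcing the algorithm to overbuild the budget slightly before resuming exploration. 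I expect this bookkeeping to be the delicate part, but since the new slack scales with the same confidence widths as the terms already present, it only inflates the budget-building constant from $6L$ to $7L$, giving \eqref{eq:unknown-mu0-pd-upper}.
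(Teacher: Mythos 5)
Your overall route is the one the paper takes: work on the event $F$ extended to include arm $0$, verify the constraint by comparing $\widetilde{Z}_t$ to $\xi'_t$ with exactly the case split you describe ($T_0(t-1)<(1-\alpha)t$ versus $T_0(t-1)\ge(1-\alpha)t$), and control the number of conservative rounds through the budget deficit accumulated by the time of the last conservative round $\tau$. The constraint part of your argument is complete and correct, including the explanation of why the upper bound $\theta_0(t)$ must be used in \eqref{eq:unknown-mu0-budget}.

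The gap is in how you dispose of the new slack caused by not knowing $\mu_0$. You assert that this slack ``scales with the same confidence widths as the terms already present'' and therefore only inflates $6L$ to $7L$; that is not right as stated. Rearranging $\xi'_\tau<0$, the extra contribution relative to the known-$\mu_0$ case is
\begin{equation*}
  (1-\alpha)\,\bigl(\theta_0(\tau)-\mu_0\bigr)\Bigl(1+\sum_{i=1}^K T_i(\tau-1)\Bigr),
\end{equation*}
i.e.\ arm $0$'s confidence width multiplied by the \emph{total} number of non-default plays; a priori this is of order $n\sqrt{L/T_0(\tau-1)}$ and is not comparable to the per-arm terms $\sqrt{L\,T_i(\tau-1)}$ already in the deficit. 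The missing idea is a dichotomy on $T_0(n)$: either $T_0(n)\le 4L/\Delta_0^2+1$, in which case arm $0$'s regret is absorbed into the first sum of \eqref{eq:unknown-mu0-pd-upper} (this, rather than arm $0$ being ``over-explored like any other arm,'' is why that sum includes $i=0$); or $T_0(\tau-1)\ge 4L/\Delta_0^2$, which forces $\theta_0(\tau)\le\mu_0+\Delta_0/2$ and thus converts the problematic slack into a per-arm shift of the offset, $a_i=(1-\alpha)(\mu_0+\Delta_0/2)-\mu_i$ in place of $(1-\alpha)\mu_0-\mu_i$. One must then redo the three-case bound on $S_i$ with this shifted $a_i$ --- in particular check that the denominator $(1+\alpha)\Delta_0/2+\alpha\mu_0-\Delta_i$ arising when $a_i<0$ is still comparable to $\max\{\Delta_i,\Delta_0-\Delta_i\}$ --- and it is exactly there that the constant degrades from $6$ to $7$. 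Without this step the claimed bound on the deficit, and hence on $T_0(n)$, does not follow.
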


Theorem~\ref{thm:unknown-mu0} shows that we get the same order of
regret for unknown $\mu_0$.  The proof is very similar to the one for
Theorem~\ref{thm:upper} and is also left for the appendix.


\section{The Adversarial Setting}
\label{sec:adversarial}

Unlike the stochastic case, in the adversarial multi-armed bandit
setting we do not make any assumptions about how the rewards are
generated.  Instead, we analyze a learner's worst-case performance
over all possible sequences of rewards $(X_{t,i})$.  In effect, we are
treating the environment as an adversary that has intimate knowledge
of the learner's strategy and will devise a sequence of rewards that
maximizes regret.  To preserve some hope of succeeding, however, the
learner is allowed to behave randomly: in each round it can randomize
its choice of arm $I_t$ using a distribution it constructs; the
adversary cannot influence nor predict the result of this random
choice.

Our goal is, as before, to satisfy the
constraint~\eqref{eq:constraint} while bounding the
regret~\eqref{eq:regret} with high probability (the randomness comes
from the learner's actions).  We assume that the default arm has a
fixed reward: $X_{t,0}=\mu_0\in[0,1]$ for all $t$; the other arms'
rewards are generated adversarially in $[0,1]$.  The constraint to be
satisfied then becomes $\sum_{s=1}^t X_{s,I_s} \ge (1-\alpha)\mu_0t$
for all $t$.

\paragraph*{Safe-playing strategy:}
We take any standard any-time high probability algorithm for
adversarial bandits and adapt it to play as usual when it is safe to
do so, i.e.\ when
$Z_t \ge \sum_{s=1}^{t-1}X_{s,I_s} - (1-\alpha)\mu_0t \ge 0$.
Otherwise it should play $I_t=0$.  To demonstrate a regret bound, we
only require that the bandit algorithm satisfy the following
requirement.
\begin{definition}
  An algorithm $\mathcal{A}$ is $\hat{R}_t^\delta$-\emph{admissible}
  ($\hat{R}_t^\delta$ sub-linear) if for any $\delta$, in the
  adversarial setting it satisfies
  \begin{align*}
    \P{\forall t\in\set{1,2,\dotsc}, R_t\le \hat{R}_t^\delta} \ge 1-\delta.
  \end{align*}
\end{definition}

Note that this performance requirement is stronger than the typical
high probability bound but is nevertheless achievable.  For example,
\citet{Neu2015} states the following for the any-time version of their
algorithm: given any time horizon $n$ and confidence level $\delta$,
$\P{R_n\le \hat{R}'_n(\delta)}\ge 1- \delta$ for some sub-linear
$\hat{R}'_t(\delta)$.  If we let
$\hat{R}_{t}^\delta = \hat{R}'_t(\delta/2t^2)$ then
$ \P{R_t\le \hat{R}_t^\delta}\ge 1- \frac{\delta}{2t^2} $ holds for
any fixed $t$.  Since the algorithm does not require $n$ and $\delta$
as input, a union bound shows it to be $\hat{R}_t^\delta$-admissible.

Having satisfied ourselves that there are indeed algorithms that meet
our requirements, we can prove a regret guarantee for our safe-playing
strategy.

\begin{restatable}{theorem}{thmadvupper}\label{thm:adv-upper}
  Any $\hat{R}_t^\delta$-admissible algorithm $\mathcal{A}$, when
  adapted with our safe-playing strategy, satisfies the
  constraint~\eqref{eq:constraint} and has a regret bound of
  $ R_n\le t_0 + \hat{R}_{n}^\delta$ with probability at least
  $1-\delta$ where
  $t_0 = \max\set{t\given\alpha\mu_0 t\le \hat{R}_{t}^\delta + \mu_0}$.
\end{restatable}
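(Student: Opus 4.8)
The plan is to prove the two claims separately: first that the constraint holds surely (for every reward sequence), and then that the regret bound holds on the $1-\delta$ event on which the base algorithm $\mathcal{A}$ is admissible. For the constraint I would argue by induction on $t$ that the budget $Z_t = \sum_{s=1}^t X_{s,I_s} - (1-\alpha)\mu_0 t$ stays nonnegative, with base case $Z_0 = 0$. The point is that the safe-playing rule plays $I_t = J_t$ exactly when $\sum_{s=1}^{t-1}X_{s,I_s} - (1-\alpha)\mu_0 t \ge 0$, i.e.\ when even a zero reward in round $t$ would keep $Z_t \ge 0$; since all rewards lie in $[0,1]$ this guarantees $Z_t \ge 0$ on such rounds regardless of $X_{t,J_t}$. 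On the complementary (``conservative'') rounds we play $I_t = 0$, which pays exactly $\mu_0$ and increases the budget by $\mu_0 - (1-\alpha)\mu_0 = \alpha\mu_0$; combined with the inductive hypothesis $Z_{t-1}\ge 0$ this gives $Z_t = Z_{t-1} + \alpha\mu_0 \ge 0$. A useful reformulation I would record here is that round $t$ is conservative exactly when $Z_{t-1} < (1-\alpha)\mu_0$.

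For the regret, let $\mathcal{C}$ denote the set of conservative rounds and $S_t = \sum_{s=1}^t X_{s,I_s}$. The crucial bookkeeping point is that $\mathcal{A}$ is updated only on the non-conservative rounds: on a conservative round the learner plays the default and observes only $\mu_0$, not $X_{t,J_t}$, so $\mathcal{A}$ effectively runs on the subsequence of free rounds and its admissibility guarantee applies to that subsequence. On the $1-\delta$ event I would compare $\mathcal{A}$'s reward on the free rounds to the best fixed arm over those same rounds. Writing $a^\ast$ for the overall best arm and using $\sum_{\text{free}} X_{t,J_t} \ge \sum_{\text{free}} X_{t,a^\ast} - \hat{R}^\delta_n$, the free-round terms of the comparator cancel and only the (at most $1$ each) overridden rounds survive, yielding $R_n \le \hat{R}^\delta_n + (1-\mu_0)\lvert\mathcal{C}\rvert \le \hat{R}^\delta_n + \lvert\mathcal{C}\rvert$.

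It then remains to show $\lvert\mathcal{C}\rvert \le t_0$, which I would obtain by bounding the last conservative round $\tau = \max\mathcal{C}$. The key step is the lower bound $S_{\tau-1} \ge (\tau-1)\mu_0 - \hat{R}^\delta_{\tau-1}$: comparing $\mathcal{A}$'s reward on the free rounds before $\tau$ to the \emph{default} arm on exactly those rounds makes the number of terms match, so the $\mu_0$ earned on the conservative rounds before $\tau$ cancels cleanly and no spurious $\lvert\mathcal{C}\rvert$ term survives. Since round $\tau$ is conservative, $S_{\tau-1} < (1-\alpha)\mu_0\tau$; combining the two inequalities and simplifying gives $\alpha\mu_0\tau < \mu_0 + \hat{R}^\delta_{\tau-1} \le \mu_0 + \hat{R}^\delta_\tau$, so $\tau$ satisfies the inequality defining $t_0$ and therefore $\tau \le t_0$. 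Hence $\lvert\mathcal{C}\rvert \le \tau \le t_0$ and $R_n \le t_0 + \hat{R}^\delta_n$ on the $1-\delta$ event.

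The main obstacle is precisely the regret bookkeeping in the last two steps: the clean cancellation of the conservative-round contributions depends entirely on running $\mathcal{A}$ on the free subsequence and comparing against the default arm over that same subsequence, rather than against $\mu_0 n$ over all rounds, which would leave an uncancelled term linear in $\lvert\mathcal{C}\rvert$ and make the bound vacuous. I would also need the mild technical facts that $\hat{R}^\delta_t$ is nondecreasing (so that evaluating it at the free-round count is dominated by evaluating it at the real time) and that $t_0$ is finite, the latter following from sublinearity of $\hat{R}^\delta_t$ together with $\alpha\mu_0 > 0$.
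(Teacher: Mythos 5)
Your proposal is correct and follows essentially the same route as the paper's proof: the constraint is immediate from the definition of the safe-playing rule, the regret is split over free and conservative rounds with admissibility applied to the free subsequence, and the last conservative round $\tau$ is bounded by comparing $\mathcal{A}$'s free-round reward to the default arm on exactly those rounds and combining with $\sum_{s<\tau}X_{s,I_s}<(1-\alpha)\mu_0\tau$ to get $\alpha\mu_0\tau\le\hat{R}^\delta_\tau+\mu_0$. You are in fact slightly more explicit than the paper about the technical points it leaves implicit (monotonicity of $\hat{R}^\delta_t$ and finiteness of $t_0$).
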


\begin{corollary}\label{cor:adv-upper-2}
  The any-time high probability algorithm of \citet{Neu2015} adapted
  with our safe-playing strategy gives
  $\hat{R}_t^\delta = 7\sqrt{Kt\log K}\log(4t^2/\delta)$ and
  \begin{align*}
    R_n\le 7\sqrt{Kn\log K}\log(4n^2/\delta) + \frac{49K\log K}{\alpha^2\mu_0^2}\log^2\frac{4n^2}{\delta} 
  \end{align*}
  with probability at least $1-\delta$.
\end{corollary}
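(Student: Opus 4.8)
The plan is to treat this corollary as a direct specialization of \cref{thm:adv-upper}: once we identify the admissible regret profile $\hat{R}_t^\delta$ of the adapted \citet{Neu2015} algorithm and evaluate the resulting threshold $t_0$, the stated bound follows by substituting into $R_n \le t_0 + \hat{R}_n^\delta$. So there are really only two things to do — pin down $\hat{R}_t^\delta$, and bound $t_0$ — and the only genuinely delicate point is solving the implicit inequality that defines $t_0$. The constraint~\eqref{eq:constraint} is inherited verbatim from \cref{thm:adv-upper}, so no extra work is needed there.

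For the first step, I would take the any-time high-probability guarantee of \citet{Neu2015}, which has the form $\hat{R}'_t(\delta') = 7\sqrt{Kt\log K}\log(2/\delta')$, and apply the union-bound construction already described just before the theorem: setting $\delta' = \delta/(2t^2)$ gives a per-round failure probability $\delta/(2t^2)$, and since $\sum_{t\ge 1} 1/t^2 = \pi^2/6 < 2$ the total failure probability is at most $\delta$. Hence the adapted algorithm is $\hat{R}_t^\delta$-admissible with $\hat{R}_t^\delta = \hat{R}'_t(\delta/2t^2) = 7\sqrt{Kt\log K}\log(4t^2/\delta)$, exactly the profile claimed. This makes \cref{thm:adv-upper} applicable and immediately produces the leading term $\hat{R}_n^\delta = 7\sqrt{Kn\log K}\log(4n^2/\delta)$.

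The second step is to bound $t_0 = \max\set{t : \alpha\mu_0 t \le \hat{R}_t^\delta + \mu_0}$. At $t = t_0$ the defining inequality holds:
\[
\alpha\mu_0 t_0 \le 7\sqrt{Kt_0\log K}\,\log\!\frac{4t_0^2}{\delta} + \mu_0 .
\]
In the relevant regime $t_0 \le n$ I would replace the self-referential logarithm $\log(4t_0^2/\delta)$ by the larger $\log(4n^2/\delta)$, absorb the lower-order additive $\mu_0$ into the right-hand side, divide through by $\sqrt{t_0}$, and square. This yields
\[
t_0 \le \frac{49K\log K}{\alpha^2\mu_0^2}\,\log^2\!\frac{4n^2}{\delta},
\]
and combining both pieces in $R_n \le t_0 + \hat{R}_n^\delta$ gives the stated bound.

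The main obstacle is the implicit, self-referential nature of $t_0$: it appears inside a logarithm within its own defining inequality, so one cannot solve for it in closed form. The resolution is the observation that, for the purpose of bounding $R_n$, we may restrict to $t_0 \le n$ — if $t_0 > n$ we are in the trivially hard regime (cf.\ the discussion that $\alpha \in O(1/n)$ forces a near-linear budget-building phase) and the bound is uninformative anyway — which upgrades $\log(4t_0^2/\delta)$ to $\log(4n^2/\delta)$ and decouples the recursion. The remaining care is purely bookkeeping: confirming the union-bound constant keeps the failure probability at $\delta$, and checking that dropping the additive $\mu_0$ contributes only a lower-order term that is harmlessly absorbed, so the clean constant $49 = 7^2$ survives.
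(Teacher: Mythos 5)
Your proposal is correct and matches the paper's (implicit) argument: the paper gives no separate proof of \cref{cor:adv-upper-2}, relying exactly on the instantiation of \cref{thm:adv-upper} with the admissible profile $\hat{R}_t^\delta = \hat{R}'_t(\delta/2t^2)$ from \citet{Neu2015} and the same solve-for-$t_0$ computation you carry out. Your handling of the two loose ends (the self-referential logarithm via $t_0\le n$, and absorbing the additive $\mu_0$) is at the same level of rigor as the paper's own statement, which likewise reports the clean constant $49$.
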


Corollary~\ref{cor:adv-upper-2} shows that a strategy similar to that
of \cref{alg:cucb} also works for the adversarial setting.  However,
we pay a higher regret penalty to satisfy the constraint:
$O\paren*{\frac{KL^2}{{(\alpha\mu_0)}^2}}$ rather than the
$O\paren*{\frac{KL}{\alpha\mu_0}}$ we had in the stochastic setting.
Whether this is because
\begin{enumerate*}[label={(\roman*)}]
\item our algorithm is sub-optimal,
\item the analysis is not tight, or
\item there is some intrinsic hardness in the non-stochastic setting
\end{enumerate*}
is still not clear and remains an interesting open problem.


\section{Lower Bound on the Regret}
\label{sec:lower-bound}


We now present a worst-case lower bound where $\alpha$, $\mu_0$ and $n$ are fixed, but the mean rewards are free to change.
For any vector $\mu\in[0,1]^K$, we will write
$\mathds{E}_\mu$  to denote expectations  under the environment where all arms have
normally-distributed unit-variance rewards and means $\mu_i$ (i.e.,
the fixed value $\mu_0$ is the mean reward of arm 0 and the components
of $\mu$ are the mean rewards of the other arms).
We assume normally distributed noise for simplicity: Other subgaussian distributions
work identically as long as the subgaussian parameter can be kept fixed independently of the mean rewards.

\begin{restatable}{theorem}{thmminimaxlb}\label{thm:minimax-lb}
Suppose for any $\mu_i\in[0,1]$ ($i>0$) and $\mu_0$ satisfying
\begin{align*}
  \min\set{\mu_0,1-\mu_0}
  &\geq \max\set*{1/2\sqrt{\alpha}, \sqrt{e+1/2}}\sqrt{K/n},
\end{align*}
an algorithm satisfies
$\Ex_\mu\sum_{t=1}^n X_{t,I_t} \geq (1-\alpha)\mu_0 n $.  Then there
is some $\mu\in[0,1]^K$ such that its expected regret satisfies
$\Ex_\mu R_n \geq B$ where
\begin{equation}
  \label{eq:def-lower-bound}
  B=
\max \set*{
  \frac{K}{(16e+8)\alpha \mu_0},
  \frac{\sqrt{Kn}}{\sqrt{16e+8}}
  }.
\end{equation}
\end{restatable}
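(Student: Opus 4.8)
The plan is to use the standard change-of-measure (information-theoretic) technique for bandit lower bounds, but to feed the return constraint into it so that it limits how much the learner may explore. Since rewards have zero-mean noise and $I_t$ is chosen before the reward is revealed, $\Ex_\mu R_n\ge n\mu^*-\Ex_\mu\sum_{t=1}^n\mu_{I_t}$ (the first term by Jensen's inequality, the second by the zero-mean property), and this equals $\sum_{i}\Ex_\mu[T_i(n)]\Delta_i$. It therefore suffices to lower bound the expected gap-weighted play counts in a cleverly chosen environment.

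Fix a gap $\Delta\in(0,\min\set{\mu_0,1-\mu_0}]$ to be tuned later, and let $\nu$ be the \emph{reference} environment in which arm $0$ has mean $\mu_0$ and every alternative $i>0$ has mean $\mu_0-\Delta$, so that the default is optimal. For each $j\in\set{1,\dotsc,K}$ let $\nu_j$ raise arm $j$'s mean to $\mu_0+\Delta$, making arm $j$ optimal with gap $\Delta$ over the default and $2\Delta$ over every other alternative. The hypothesis $\min\set{\mu_0,1-\mu_0}\ge\cdots\sqrt{K/n}$ is precisely what keeps all these means inside $[0,1]$, so the assumed constraint holds in each of them.

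Now apply the constraint in $\nu$. There the expected return equals $\mu_0 n-\Delta\sum_{j>0}\Ex_\nu[T_j(n)]$, so $\Ex_\nu\sum_t X_{t,I_t}\ge(1-\alpha)\mu_0 n$ forces $\sum_{j>0}\Ex_\nu[T_j(n)]\le\alpha\mu_0 n/\Delta$. Combined with the trivial $\sum_{j>0}T_j(n)\le n$, some arm $j^*$ must satisfy $\Ex_\nu[T_{j^*}(n)]\le\frac1K\min\set{n,\alpha\mu_0 n/\Delta}=:\bar T$. Because $\nu$ and $\nu_{j^*}$ differ only in arm $j^*$, the divergence decomposition gives $\mathrm{KL}(\nu\,\|\,\nu_{j^*})=2\Delta^2\Ex_\nu[T_{j^*}(n)]\le 2\Delta^2\bar T$ for unit-variance Gaussians. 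Applying the Bretagnolle--Huber inequality to the event $\set{T_{j^*}(n)\le n/2}$, and using Markov's inequality to bound $\Pr_\nu(T_{j^*}(n)>n/2)\le 2\bar T/n$, yields $\Ex_{\nu_{j^*}}R_n\ge\Delta\,\Ex_{\nu_{j^*}}[\,n-T_{j^*}(n)\,]\ge\frac{n\Delta}{4}\exp(-2\Delta^2\bar T)-O(\Delta\bar T)$, since every pull of an arm other than $j^*$ costs at least $\Delta$ under $\nu_{j^*}$.

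It remains to optimize $\Delta$, which produces the two terms of $B$. If the constraint binds ($\Delta\gtrsim\alpha\mu_0$, so $\bar T=\alpha\mu_0 n/(K\Delta)$) then $2\Delta^2\bar T=2\alpha\mu_0 n\Delta/K$; choosing $\Delta\asymp K/(\alpha\mu_0 n)$ makes the exponent $\Theta(1)$ and the bound $\Theta(n\Delta)=\Theta(K/(\alpha\mu_0))$, the first term. If the constraint is slack ($\Delta\lesssim\alpha\mu_0$, so $\bar T=n/K$) the classical choice $\Delta\asymp\sqrt{K/n}$ gives the minimax term $\Theta(\sqrt{Kn})$. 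Taking the larger of the two bounds yields $B$. I expect the main obstacle to lie in this last optimization together with the Bretagnolle--Huber step: one must carry the constants through carefully enough to recover the exact $16e+8$ and $\sqrt{e+1/2}$, show the $O(\Delta\bar T)$ correction is negligible in both regimes, and check that the optimizing $\Delta$ always respects $\Delta\le\min\set{\mu_0,1-\mu_0}$ — which is exactly the role of the two-part hypothesis on $\min\set{\mu_0,1-\mu_0}$.
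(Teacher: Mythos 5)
Your proposal takes a genuinely different route from the paper's. The paper assumes without loss of generality that $\Ex_\mu R_n\le B$ in \emph{every} environment, applies the change-of-measure inequality $\Ex_\mu[T_i]\,\mathrm{KL}(\mu_i;\mu^{(i)}_i)\ge d(\Pr_\mu(A_i),\Pr_{\mu^{(i)}}(A_i))\ge\tfrac12\log\tfrac{1}{4\Pr_{\mu^{(i)}}(A_i)}$ to every arm $i$ separately (with $A_i=\{T_i\le 2B/\Delta\}$, resp.\ $\{T_i\le 2\alpha\mu_0 n/\Delta\}$), deduces $\Ex_\mu[T_i]\ge 1/(4\Delta^2)$ for \emph{all} $i$, and sums these to exhibit regret $K/(4\Delta)\ge B$ in the \emph{reference} environment itself (in the small-$\alpha$ case this contradicts the constraint-implied bound $\Ex_\mu R_n\le\alpha\mu_0 n$). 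You instead invoke the constraint once, in the reference environment, to cap the total exploration, single out the least-explored arm $j^*$ by pigeonhole, and exhibit large regret in the \emph{perturbed} environment $\nu_{j^*}$ via Bretagnolle--Huber; you never need the consistency assumption. Both are legitimate; the paper's per-arm argument is what buys its constants (each of the $K$ arms contributes $1/(4\Delta)$ to the regret), while yours invokes the information inequality only once and is structurally closer to the textbook minimax proof. Your reduction to gap-weighted pull counts, the environment family, the case split at $\alpha\mu_0\asymp\sqrt{K/n}$, and the role of the hypothesis on $\min\{\mu_0,1-\mu_0\}$ all match the paper and are correct.

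Two concrete problems remain. First, your Markov bound $\Pr_\nu(T_{j^*}(n)>n/2)\le 2\bar T/n$ is vacuous in the unconstrained regime when $K$ is small: there $\bar T=n/K$, so the bound is $2/K$, which is $\ge 1$ for $K\le 2$ and already swamps $\tfrac12 e^{-2\Delta^2\bar T}$ for moderate $K$. The standard repair is to drop Markov and note that a large $\Pr_\nu(T_{j^*}(n)>n/2)$ itself forces $\Ex_\nu R_n\ge\tfrac{n\Delta}{2}\Pr_\nu(T_{j^*}(n)>n/2)$, so that $\Ex_\nu R_n+\Ex_{\nu_{j^*}}R_n\ge\tfrac{n\Delta}{4}e^{-\mathrm{KL}}$ and one of the two environments witnesses the bound. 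Second, even after that repair the single-arm Bretagnolle--Huber route provably yields a worse universal constant than the paper's sum-over-all-arms argument (roughly $\tfrac{1}{16}e^{-1/2}\sqrt{Kn}$ against $\sqrt{Kn}/\sqrt{16e+8}$ in the minimax regime), so you would prove the theorem only up to a constant factor, not with the exact $B$ and the exact thresholds $1/2\sqrt{\alpha}$ and $\sqrt{e+1/2}$ of the statement. You flag the constants as the main obstacle yourself, but it is not merely a matter of carrying them carefully: the two methods genuinely produce different constants, and recovering the stated ones requires the paper's per-arm counting argument.
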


Theorem~\ref{thm:minimax-lb} shows that our algorithm for the
stochastic setting is near-optimal (up to a logarithmic factor $L$) in
the worst case.  A problem-dependent lower bound for the stochastic
setting would be interesting but is left for future work.  Also note
that in the lower bound we only use
$\Ex_\mu\sum_{t=1}^n X_t \geq (1-\alpha)n\mu_0$ for the last round
$n$, which means that the regret guarantee cannot be improved if we
only care about the last-round budget instead of the anytime budget.
In practice, however, enforcing the constraint in all rounds will
generally lead to significantly worse results because the algorithm
cannot explore early on.  This is demonstrated empirically in
\cref{sec:experiments}, where we find that the Unbalanced MOSS
algorithm performs very well in terms of the expected regret, but does
not satisfy the constraint in early rounds.

\begin{remark}
The theorem above almost
follows from the lower bound given by \citet{Lat15}, but in that paper $\mu_0$ is 
unknown, while here it may be known. This makes our result strictly stronger, as the lower bound is the same up to 
constant factors.
\end{remark}


\section{Experiments}
\label{sec:experiments}
\pgfplotsset{cycle list={{red,dotted}, {green!50!black}, {blue,dashed}, {black},{blue}}}
\pgfplotsset{every axis plot/.append style={line width=2pt}}

\pgfplotstableread[comment chars={\%}]{data/exp1.txt}{\tableFirst}
\pgfplotstableread[comment chars={\%}]{data/exp2.txt}{\tableSecond}

\newcommand{\defaultaxis}{
  legend cell align=left,
  scaled ticks=false,
  tick label style={/pgf/number format/fixed},
  compat=newest
}

We evaluate the performance of Conservative UCB compared to UCB and Unbalanced MOSS~\cite{Lat15} using simulated data in two regimes. 
In the first we fix
the horizon and sweep over $\alpha \in [0,1]$ to show the degradation of the average regret of Conservative UCB relative
to UCB as the constraint becomes harsher ($\alpha$ close to zero). In the second regime we fix $\alpha = 0.1$ and
plot the long-term average regret, showing that Conservative UCB is eventually nearly as good as UCB, despite the constraint. 
Each data point is an average of $N \approx 4000$ i.i.d.\ samples, which makes error bars too small to see.
All code and data will be made available in any final version.
Results are shown for both versions of Conservative UCB\@: The first knows the mean $\mu_0$ of the default arm while
the second does not and must act more conservatively while learning this value. As predicted by the theory, the difference
in performance between these two versions of the algorithm is relatively small, but note that even when $\alpha = 1$ the algorithm
that knows $\mu_0$ is performing better because this knowledge is useful in the unconstrained setting. This is also true of the BudgetFirst
algorithm, which is unconstrained when $\alpha = 1$ and exploits its knowledge of $\mu_0$ to eliminate the default arm. This algorithm is so conservative
that even when $\alpha$ is nearly zero it must first build a significant budget.
We tuned the Unbalanced MOSS algorithm with the following parameters.
\eq{
B_0 &= \frac{nK}{\sqrt{nK} + \frac{K}{\alpha \mu_0}} &
B_i &= B_K = \sqrt{nK} + \frac{K}{\alpha \mu_0}\,.
}
The quantity $B_i$ determines the regret of the algorithm with respect to arm $i$ up to constant factors, and must be chosen to lie 
inside the Pareto frontier given by \citet{Lat15}.
It should be emphasised that Unbalanced MOSS does \emph{not} constraint the return except for
the last round, and has no high-probability guarantees. 
This freedom allows it to explore early, which gives it a significant advantage over the highly constrained Conservative UCB\@.
Furthermore, it also requires $B_0,\ldots,B_K$ as inputs, which means that $\mu_0$ must be known in advance. 
The mean rewards in both experiments are $\mu_0 = 0.5$, $\mu_1 = 0.6$, $\mu_2 = \mu_3 = \mu_4 = 0.4$, which means that the default
arm is slightly sub-optimal. 

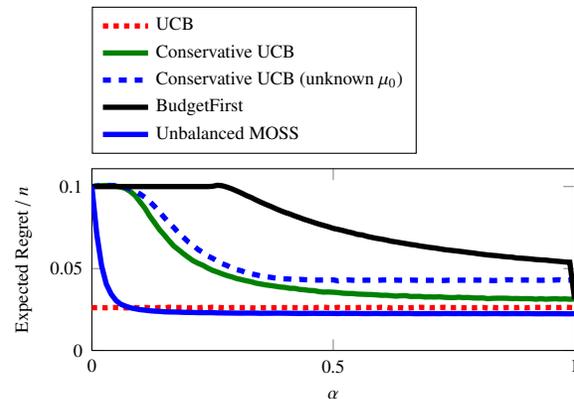
\begin{figure}[H]
  \begin{center}
    \begin{tikzpicture}[font=\scriptsize]
      \begin{axis}[\defaultaxis, xmin=0, ymin=0, xmax=1, width=8cm,
        height=4cm, xtick={0,0.5,1}, xlabel={$\alpha$}, legend
        style={anchor=south west,at={(axis cs:0,0.12)}},
        ylabel={Expected Regret / $n$}]
        
        \addplot+[] table[x expr=\thisrowno{0},y
        expr=\thisrowno{1}/10000] \tableSecond; \addlegendentry{UCB};
        \addplot+[] table[x expr=\thisrowno{0},y
        expr=\thisrowno{2}/10000] \tableSecond;
        \addlegendentry{Conservative UCB} \addplot+[] table[x
        expr=\thisrowno{0},y expr=\thisrowno{3}/10000] \tableSecond;
        \addlegendentry{Conservative UCB (unknown $\mu_0$)}
        \addplot+[] table[x expr=\thisrowno{0},y
        expr=\thisrowno{4}/10000] \tableSecond;
        \addlegendentry{BudgetFirst} \addplot+[] table[x
        expr=\thisrowno{0},y expr=\thisrowno{5}/10000] \tableSecond;
        \addlegendentry{Unbalanced MOSS}
      \end{axis}
    \end{tikzpicture}
  \end{center}
  \vspace{-0.75cm}
  \caption{\label{fig:alpha}Average regret for varying $\alpha$ and
    $n = 10^4$ and $\delta = 1/n$}
\end{figure}

\vspace{-0.5cm}

\begin{figure}[H]
  \begin{center}
    \begin{tikzpicture}[font=\scriptsize]
      \begin{axis}[\defaultaxis, xmin=100, ymin=0, xmax=100000,
        width=8cm, height=4cm, scaled x ticks=false,
        xtick={2000,50000,100000}, xlabel={$n$}, ylabel={Expected Regret
          / $n$}]
        
        \addplot+[] table[x expr=\thisrowno{0},y
        expr=\thisrowno{1}/\thisrowno{0}] \tableFirst; \addplot+[]
        table[x expr=\thisrowno{0},y expr=\thisrowno{2}/\thisrowno{0}]
        \tableFirst; \addplot+[] table[x expr=\thisrowno{0},y
        expr=\thisrowno{3}/\thisrowno{0}] \tableFirst; \addplot+[]
        table[x expr=\thisrowno{0},y expr=\thisrowno{4}/\thisrowno{0}]
        \tableFirst; \addplot+[] table[x expr=\thisrowno{0},y
        expr=\thisrowno{5}/\thisrowno{0}] \tableFirst;
      \end{axis}
    \end{tikzpicture}
  \end{center}
  \vspace{-0.75cm}
  \caption{Average regret as $n$ varies with $\alpha = 0.1$ and $\delta = 1/n$}
\end{figure}
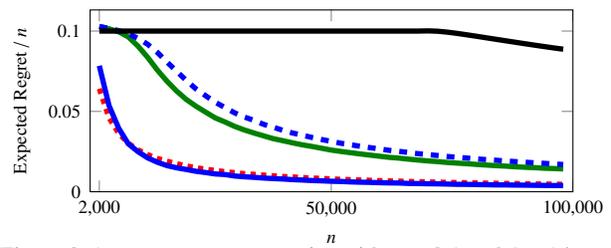

\vspace{-0.5cm}


\section{Conclusion}
We introduced a new family of multi-armed bandit frameworks motivated
by the requirement of exploring conservatively to maintain revenue.
We also demonstrated various strategies that act effectively while
maintaining such constraints.  We expect that similar strategies
generalize to other settings, like contextual bandits and
reinforcement learning.  We want to emphasize that this is just the
beginning of a line of research that has many potential applications.
We hope that others will join us in improving the current results,
closing open problems, and generalizing the model so it is more
widely applicable.

\if0
There are a variety of open problems, some of which we now describe. 
\begin{enumerate}

\item {\bf Specific noise models.} 
It might be desirable to design algorithms that adapt to different noise models, of
which the Bernoulli is perhaps the most important. In principle we expect a modification of the KL-UCB algorithm
by \citet{CGMMS13} would be possible, but many details would need to be checked. One could also consider the case where a bound on the range
of the noise is known, but the variance is unknown as was done by \citet{AMS07}.
\item {\bf Improving constant factors.}
The use of the finite-term version of the law of iterated logarithm
is attractive theoretically, but the downside is the additional constant factors inside the square root that may inhibit practical performance.
We note that this is not unique to our problem and that tighter versions of \cref{lem:lil} can be substituted into the algorithm with 
no other modifications required. Note that the worst case upper bound is logarithmically separated from the lower bound, so this is also an
avenue for potential improvement.

\end{enumerate}
\fi


\renewcommand{\bibsection}{\subsection*{\refname}}

\bibliographystyle{abbrvnat}
\bibliography{references} 

\appendix

\twocolumn[
\thispagestyle{plain}
\centering\textbf{\Large Appendix}
\vspace{2\baselineskip}
]
\section{Proof of Theorem~\ref{thm:upper}}
\label{sec:proof-thm-upper}

\thmupper*{}

\begin{proof}
  By \cref{remark:choosing-psi}, with probability $\P{F}\geq 1-\delta$
  the confidence intervals are valid for all $t$ and all arms
  $i\in\set{1,\dotsc,K}$:
  \begin{align*}
    \abs{\hat{\mu}_i(t-1)-\mu_i}& \le \sqrt{\psi^\delta(T_i(t-1))/T_i(t-1)} \\& \le \sqrt{L/T_i(t-1)};
  \end{align*}
  we will henceforth assume that this is the case (i.e.\ that $F$
  holds).  By the definition of the confidence intervals and by the
  construction of \cref{alg:cucb} we immediately satisfy the
  constraint
  \begin{equation*}
    \sum_{t=1}^n \mu_{I_t} \geq (1 - \alpha) n \mu_0 \qquad\text{for all } n.
  \end{equation*}
  We now bound the regret.  Let $i > 0$ be the index of a sub-optimal
  arm and suppose $I_t = i$. Since the confidence intervals are valid,
  \begin{align*}
    \mu^* \leq \theta_i(t)
    &\leq \hat \mu_i(t-1) + \sqrt{L/T_i(t-1)} \\
    &\leq \mu_i + 2\sqrt{L/T_i(t-1)}\,,
  \end{align*}
  which implies that arm $i$ has not been chosen too often; in
  particular we obtain
  \begin{equation}
    \label{eq:ub-times-chosen}
    T_i(n) \leq T_i(n-1) + 1 \leq \frac{4L}{\Delta_i^2} + 1.
  \end{equation}
  and the regret satisfies
  \begin{equation*}
    \widetilde{R}_n
    = \sum_{i=0}^K T_i(n)\Delta_i
    \le \sum_{i>0:\Delta_i>0} \paren*{\frac{4L}{\Delta_i}+\Delta_i}
      + T_0(n)\Delta_0.
  \end{equation*}
  If $\Delta_0 = 0$ then the theorem holds trivially; we therefore
  assume that $\Delta_0 > 0$ and find an upper bound for $T_0(n)$.

  Let $\tau=\max\set{t\le n\given I_t=0}$ be the last round in which
  the default arm is played.  Since $F$ holds and
  $\theta_0(t) = \mu_0 < \mu^* < \max_i\theta_i(t)$, it follows that
  $J_t = 0$ is never the UCB choice; the default arm was only played
  because $\xi_\tau < 0$:
  \begin{equation}
    \label{eq:known-mu0-budget}
    \sum_{i=0}^K T_i(\tau-1)\lambda_i(\tau)+\lambda_{J_\tau}(\tau)  - (1-\alpha)\mu_0\tau < 0
  \end{equation}
  By dropping $\lambda_{J_\tau}(\tau)$, replacing $\tau$ with
  $\sum_{i=0}^K T_i(\tau-1)+1$, and rearranging the terms
  in~\eqref{eq:known-mu0-budget}, we get
  \begin{align}
    \MoveEqLeft[1] \alpha T_0(\tau-1)\mu_0 \nonumber\\
    &< (1-\alpha)\mu_0  + \sum_{i=1}^K T_i(\tau-1)\left((1-\alpha)\mu_0-\lambda_i(\tau) \right) \nonumber\\
      \begin{split}
        &\le (1-\alpha)\mu_0 
        \\&\quad + \sum_{i=1}^K T_i(\tau-1)\paren*{(1-\alpha)\mu_0-\mu_i + \sqrt{\frac{L}{T_i(\tau-1)}}}
      \end{split} \nonumber\\
    &\le 1 + \sum_{i=1}^K S_i  \,. \label{eq:known-mu0-budget-3}  
  \end{align}
  where $a_i = (1-\alpha)\mu_0-\mu_i$ and
  \begin{align*}
    S_i &= T_i(\tau-1)\cdot\paren[\Big]{(1-\alpha)\mu_0
          - \mu_i + \sqrt{L/T_i(\tau-1)}} \\
        &= a_i T_i(\tau-1) + \sqrt{LT_i(\tau-1)}
  \end{align*}
  is a bound on the decrease in $\xi_t$ in the first $\tau-1$ rounds
  due to choosing arm $i$.  We will now bound $S_i$ for each $i>0$.

  The first case is $a_i\ge 0$, i.e.\
  $\Delta_i\ge \Delta_0 +\alpha\mu_0$.
  Then~\eqref{eq:ub-times-chosen} gives
  $T_i(\tau-1)\le 4L/\Delta_i^2 + 1$ and we get
  \begin{equation}
    \label{eq:known-mu0-case1}
    S_i \le \frac{4La_i}{\Delta_i^2} + \frac{2L}{\Delta_i} + 2
    \le \frac{6L}{\Delta_i} + 2 \,.  
  \end{equation}
  The other case is $a_i < 0$, i.e.\
  $\Delta_i< \Delta_0 +\alpha\mu_0$.  Then
  \begin{equation}
    \label{eq:known-mu0-case2}
    S_i \le \sqrt{LT_i(\tau-1)} \le \frac{2L}{\Delta_i} + 1,
  \end{equation}
  and by using $ax^2+bx \le -b^2/4a$ for $a<0$ we have
  \begin{equation}
    \label{eq:known-mu0-case3}
    S_i \le -\frac{L}{4a_i} = \frac{L}{4(\Delta_0 +\alpha\mu_0 - \Delta_i)}.
  \end{equation} 
  Summarizing~\labelcref{eq:known-mu0-case1,eq:known-mu0-case2,eq:known-mu0-case3}
  gives
  \begin{equation*}
    S_i  \le \frac{6L}{\max\{\Delta_i,\Delta_0-\Delta_i\}} + 2\,.
  \end{equation*}
  Continuing from~\eqref{eq:known-mu0-budget-3}, we get
  \begin{align*}
    T_0(n)& =T_0(\tau-1)+1\\
          & \le \frac{2K+2}{\alpha\mu_0} + \frac{1}{\alpha\mu_0}
            \sum_{i=1}^K \frac{6L}{\max\{\Delta_i,\Delta_0-\Delta_i\}} \,.
  \end{align*}
  We can now upper bound the regret by
  \begin{multline}
    \widetilde{R}_n
    \le \sum_{i>0:\Delta_i>0} \left(\frac{4L}{\Delta_i}+\Delta_i\right)
    + \frac{2(K+1)\Delta_0}{\alpha\mu_0} \\
    + \frac{6L}{\alpha\mu_0}
    \sum_{i=1}^K \frac{\Delta_0}{\max\{\Delta_i,\Delta_0-\Delta_i\}} \,.
    \tag{\ref{eq:pd-upper}}
  \end{multline}
  
  We will now show~\eqref{eq:pind-upper}.  To bound the regret due to
  the non-default arms, Jensen's inequality gives
  \begin{equation*}
    \paren*{\sum_{i>0}T_i(n)\Delta_i}^2
    \leq m^2 \sum_{i>0} \frac{T_i(n)}{m} \Delta_i^2 ,
  \end{equation*}
  where $m\leq n$ is the number of times non-default arms were chosen.
  Combining this with $\Delta_i^2 \leq 4L/T_i(n)$ for sub-optimal arms
  from~\eqref{eq:ub-times-chosen} gives
  \begin{equation*}
    \sum_{i>0} T_i(n)\Delta_i \leq 2\sqrt{mKL} \in O(\sqrt{nKL}).
  \end{equation*}
  To bound the regret due to the default arm, observe that
  $\max\{\Delta_i,\Delta_0-\Delta_i\}\ge \Delta_0/2$ and thus
  $T_0(n)\Delta_0 \in O(KL/\alpha\mu_0)$.  Combining these two bounds
  gives~\eqref{eq:pind-upper}.
\end{proof}

\section{Proof of \cref{thm:unknown-mu0}}
\label{sec:proof-thm-unknown-mu0}

\thmunknownmuzero*

\begin{proof}
  We proceed very similarly to the proof of \cref{thm:upper} in
  \cref{sec:proof-thm-upper}.  As we did there, we assume that $F$
  holds: the confidence intervals are valid for all rounds and all
  arms (including the default), which happens with probability
  $\P{F}\geq 1-\delta$.

  To show that the modified algorithm satisfies the
  constraint~\eqref{eq:stoch-constraint}, we write the
  budget~\eqref{eq:stoch-budget} as
  \begin{align*}
    \widetilde{Z}_t 
    &= \sum_{i=1}^K T_i(t-1)\mu_i + \mu_{J_t} + (T_0(t-1) - (1-\alpha)t)\mu_0
  \end{align*}
  when the UCB arm $J_t$ is chosen and show that it is indeed
  lower-bounded by
  \begin{multline*}
    \xi'_t = \sum_{i=1}^K T_i(t-1)\lambda_i(t)+\lambda_{J_t}(t) \\ 
    + (T_0(t-1) - (1-\alpha)t)\theta_0(t) \,.
    \tag{\ref{eq:unknown-mu0-budget}}
  \end{multline*}
  This is apparent if $T_0(t-1) < (1-\alpha)t$, since the last term
  in~\eqref{eq:unknown-mu0-budget} is then negative and
  $\theta_0(t)\geq\mu_0$.  On the other hand, if $T_0(t-1) \geq
  (1-\alpha)t$ then the constraint is still satisfied:
  \begin{equation*}
    \sum_{s=1}^t \mu_{I_s} \geq T_0(t-1)\mu_0 \geq (1-\alpha)\mu_0t.
  \end{equation*}

  We now upper-bound the regret.  As in the earlier proof, we can show
  that for any arm $i>0$ with $\Delta_i>0$ we have
  $T_i(n)\le 4L/\Delta_i^2 + 1$.  If this also holds for $i=0$ or if
  $\Delta_0=0$ then
  $\widetilde{R}_n\le \sum_{i:\Delta>0}(4L/\Delta_i+\Delta_i)$ and the
  theorem holds trivially.  From now on we only consider the case when
  $\Delta_0 > 0$ and $T_0(n) > 4L/\Delta_0^2 + 1$.  As before, we will
  proceed to upper-bound $T_0(n)$.

  Let $\tau$ be the last round in which $I_\tau = 0$.  We can ignore
  the possibility that $J_\tau = 0$, since then the above bound on
  $T_i(n)$ would apply even to the default arm, contradicting our
  assumption above.  Thus we can assume that the default arm was
  played because $\xi'_\tau<0$:
  \begin{multline*}
    \sum_{i=1}^K T_i(\tau-1)\lambda_i(\tau)+\lambda_{J_\tau}(\tau) \\ 
    + \paren[\big]{T_0(\tau-1) - (1-\alpha)\tau}\,\theta_0(\tau) < 0 \, ,
  \end{multline*}
  in which we drop $\lambda_{J_\tau}(\tau)$, replace $\tau$ with
  $\sum_{i=0}^K T_i(\tau-1)+1$, and rearrange the terms to get
  \begin{multline}
    \label{eq:unknown-mu0-budget-2}
    \alpha T_0(\tau-1)\theta_0(\tau)<(1-\alpha)\theta_0(\tau) \\
    + \sum_{i=1}^K T_i(\tau-1)\paren[\big]{(1-\alpha)\theta_0(\tau)-\lambda_i(\tau)} \,.
  \end{multline}
  We lower-bound the left-hand side of~\eqref{eq:unknown-mu0-budget-2}
  using $\theta_0(\tau) \ge \mu_0$, whereas we upper-bound the
  right-hand side using
  \begin{align*}
    \theta_0(\tau) \le \mu_0+\sqrt{\frac{L}{T_0(\tau-1)}}
    \le \mu_0+\frac{\Delta_0}{2}\,,
  \end{align*}
  which comes from $T_0(\tau-1)\ge 4L/\Delta_0^2$.  Combining these
  in~\eqref{eq:unknown-mu0-budget-2} with the lower confidence bound
  $\lambda_i(\tau) \ge \mu_i-\sqrt{L/T_i(\tau-1)}$ gives
  \begin{align}
    \alpha\mu_0T_0(\tau-1)
    &< (1-\alpha)\paren*{\mu_0+\frac{\Delta_0}{2}} \nonumber\\
    &\hspace{1.5em} + \sum_{i=1}^K T_i(\tau-1)\Bigg((1-\alpha)
      \paren*{\mu_0+\frac{\Delta_0}{2}} \nonumber\\
    &\hspace{8.5em} - \mu_i +\sqrt{\frac{L}{T_i(\tau-1)}} \Bigg) \nonumber\\
    &= (1-\alpha)\paren*{\mu_0 + \frac{\Delta_0}{2}}  + \sum_{i=1}^K S_i \nonumber\\
    &\le 1 + \sum_{i=1}^K S_i  \,, \label{eq:unknown-mu0-budget-3}
  \end{align}
  where $a_i = (1-\alpha)(\mu_0+\Delta_0/2)-\mu_i$ and
  \begin{align*}
    S_i=a_i T_i(\tau-1) + \sqrt{LT_i(\tau-1)}
  \end{align*}
  is a bound on the decrease in $\xi'_t$ in the first $\tau-1$ rounds
  due to choosing arm $i$.  We will now bound $S_i$ for each $i>0$.

  Analogously to the previous proof, we get the bounds
  \begin{align}
    S_i &\leq \frac{6L}{\Delta_i} + 2, \quad\text{when } a_i \geq 0 \,;
          \label{eq:unknown-mu0-case1}\\
    S_i &\leq \frac{2L}{\Delta_i} + 1\,, \quad\text{otherwise; }
          \label{eq:unknown-mu0-case2}
          \intertext{and in the latter case, using $ax^2+bx \le -b^2/4a$ gives}
    S_i &\le -\frac{L}{4a_i}
          = \frac{L}{4\paren[\big]{(1+\alpha)\Delta_0/2
          + \alpha\mu_0 - \Delta_i}}\,. \label{eq:unknown-mu0-case3}
  \end{align}
  Summarizing~\labelcref{eq:unknown-mu0-case1,eq:unknown-mu0-case2,%
    eq:unknown-mu0-case3} gives
  \begin{align*}
    S_i & \le \frac{6L}{\max\set*{\Delta_i,
          24 \paren[\big]{(1+\alpha)\Delta_0/2 + \alpha\mu_0 - \Delta_i}}} + 2\\
        & \le \frac{7L}{\max\set{\Delta_i,\Delta_0-\Delta_i}} + 2\,.
  \end{align*}
  Continuing with~\eqref{eq:unknown-mu0-budget-3}, if
  $T_0(n) > \frac{4L}{\Delta_0^2}+1$, we get
  \begin{align*}
    T_0(n) &=T_0(\tau-1) + 1\\
           &\le \frac{2K+2}{\alpha\mu_0} + \frac{1}{\alpha\mu_0}
             \sum_{i=1}^K \frac{7L}{\max\{\Delta_i,\Delta_0-\Delta_i\}} \,.
  \end{align*}
  We can now upper bound the regret by
  \begin{multline}
    \widetilde{R}_n
    \le \sum_{i:\Delta_i>0} \left(\frac{4L}{\Delta_i}+\Delta_i\right)
    + \frac{2(K+1)\Delta_0}{\alpha\mu_0} \\
    + \frac{7L}{\alpha\mu_0} \sum_{i=1}^K
    \frac{\Delta_0}{\max\{\Delta_i,\Delta_0-\Delta_i\}} \,.
    \tag*{(\ref{eq:unknown-mu0-pd-upper}) \qed}
  \end{multline}
  \renewcommand{\qedsymbol}{}
\end{proof}

\section{Proof of Theorem~\ref{thm:adv-upper}}

\thmadvupper*

\begin{proof}[Proof of Theorem~\ref{thm:adv-upper}]
  It is clear from the description of the safe-playing strategy that
  it is indeed safe: the constraint~\eqref{eq:constraint} is always
  satisfied.

  The algorithm plays safe when the following quantity, which is a
  lower bound on the budget $Z_t$, is negative:
  \begin{equation*}
    Z'_t = Z_t - X_{t,I_t} = \sum_{s=1}^{t-1} X_{s,I_s} -
    (1-\alpha)\mu_0t
  \end{equation*}
  To upper bound the regret, consider only the rounds in which our
  safe-playing strategy does not interfere with playing
  $\mathcal{A}$'s choice of arm.  Then with probability $1-\delta$,
  \begin{align*}
    \max_{i\in\set{0,\dotsc,K}} \sum_{s=1}^t \ind{Z'_s\ge 0}(X_{s,i} - X_{s,I_s})
    \le \hat{R}^\delta_{B(t)}
  \end{align*}
  where $B(t)=\sum_{s=1}^t \ind{Z'_s\ge 0}$.  Let $\tau$ be the last
  round in which the algorithm plays safe.
  \begin{align*}
    \MoveEqLeft[1.5] \mu_0 B(\tau-1) \\
    &\le \max_i\sum_{s=1}^{\tau-1} \ind{Z'_s\ge 0} X_{s,i}\\
    &\le \hat{R}_{B(\tau-1)}^\delta + \sum_{s=1}^{\tau-1} \ind{Z'_s\ge 0} X_{s,I_s}\\
    &= \hat{R}_{B(\tau-1)}^\delta
      + \sum_{s=1}^{\tau-1} X_{s,I_s} - \mu_0(\tau-1-B(\tau-1))\\
    &\le \hat{R}_{B(\tau-1)}^\delta + (1-\alpha)\mu_0\tau - \mu_0(\tau-1-B(\tau-1)) \,,
  \end{align*}
  which indicates $\alpha\mu_0\tau \le \hat{R}_\tau^\delta + \mu_0$ and thus
  $\tau \le t_0$.  It follows that $R_n \le t_0 + \hat{R}_{n}^\delta$.
\end{proof}

\section{Proof of Theorem~\ref{thm:minimax-lb}}

\thmminimaxlb*

\begin{proof}[Proof of Theorem~\ref{thm:minimax-lb}]
  Pick any algorithm. We want to show that the algorithm's regret on
  some environment is at least as large as $B$. If $\Ex_{\mu}R_n > B$
  for some $\mu\in [0,1]^K$, there is nothing to be proven. Hence,
  without loss of generality, we can assume that the algorithm is
  \emph{consistent} in the sense that $\Ex_{\mu}R_n \leq B$ for all
  $\mu\in[0,1]^K$.

  For some $\Delta>0$, define environment $\mu \in \Real^K$ such that
  $\mu_i=\mu_0-\Delta$ for all $i\in [K]$.  For now, assume that
  $\mu_0$ and $\Delta$ are such that $\mu_i\ge 0$; we will get back to
  this condition later.  Also define environment $\mu^{(i)}$ for each
  $i=1,\dotsc,K$ by
  \begin{align*}
    \mu^{(i)}_j =
    \begin{cases}
      \mu_0 + \Delta, &\text{for } j = i\,;\\
      \mu_0 - \Delta, &\text{otherwise.}
    \end{cases}
  \end{align*}
  In this proof, we use $T_i = T_i(n)$ to denote the number of times
  arm $i$ was chosen in the first $n$ rounds.  We distinguish two
  cases, based on how large the exploration budget is.

  \textbf{Case 1: $\displaystyle
    \alpha \ge \frac{\sqrt{K}}{\mu_0 \sqrt{(16e+8)n}}$.}

  In this case, $B = \frac{\sqrt{Kn}}{\sqrt{16e+8}}$ and
  we use $\Delta = (4e+2)B/n$.  For each $i\in[K]$ define
  event $A_i = \set{T_i \le 2B/\Delta}$.  First we prove that
  $\Pr_\mu(A_i) \ge 1/2$:
  \begin{align*}
    \Pr_\mu\set{T_i\le 2B/\Delta}
    &= 1-\Pr_\mu\set{T_i > 2B/\Delta} \\
    &\ge 1-\frac{\Delta\Ex_\mu[T_i]}{2B} \\
    &\ge 1-\frac{\Ex_\mu[R_n]}{2B}
    \ge \frac{1}{2}\,.
  \end{align*}
  Next we prove that $\Pr_{\mu^{(i)}}(A_i) \le 1/4e$:
  \begin{align*}
    \Pr_{\mu^{(i)}}\set{T_i\le 2B/\Delta}
    &= \Pr_{\mu^{(i)}}\set{n-T_i\ge n-2B/\Delta} \\
    &\le \frac{\Ex_{\mu^{(i)}}[n-T_i]}{n-2B/\Delta} \\
    &\le \frac{B}{\Delta n-2B}
    = \frac{1}{4e}\,.
  \end{align*}
  Note that $\mu$ and $\mu^{(i)}$ differ only in the $i$th component:
  $\mu_i = \mu_0-\Delta$ whereas $\mu^{(i)}_i = \mu_0+\Delta$.  Then
  the KL divergence between the reward distributions of the $i$th arms
  is $\mathrm{KL}(\mu_i,\mu^{(i)}_i) = (2\Delta)^2/2 = 2\Delta^2$.
  Define the \emph{binary relative entropy} to be
  \begin{align*}
    d(x,y) &= x\log\frac{x}{y} + (1-x)\log\frac{1-x}{1-y};
  \end{align*}
  it satisfies $d(x,y)\geq(1/2)\log(1/4y)$ for $x\in[1/2, 1]$ and
  $y\in(0,1)$.  By a standard change of measure argument \citep[see,
  e.g.,][Lemma~1]{Kaufman2015} we get that
  \begin{align*}
  \Ex_\mu[T_i]\cdot\mathrm{KL}(\mu_i;\mu^{(i)}_i)
    &\ge d(\Pr_\mu(A_i), \Pr_{\mu^{(i)}}(A_i)) \\
    &\geq \frac{1}{2}\log\frac{1}{4(1/4e)} = \frac{1}{2}
  \end{align*}
  and so $\Ex_\mu[T_i] \ge 1/4\Delta^2$ for each $i\in [K]$. Hence
  \begin{align*}
    \Ex_\mu[R_n] = \Delta\sum_{i\in[K]}\Ex_\mu[T_i] \ge \frac{K}{4\Delta} = \frac{\sqrt{Kn}}{\sqrt{16e+8}} = B\,.
  \end{align*}

  \textbf{Case 2: $\displaystyle
    \alpha < \frac{\sqrt{K}}{\mu_0 \sqrt{(16e+8)n}}$.}

  In this case, $B = \frac{K}{(16e+8)\alpha \mu_0}$ and
   we use $\Delta=K/4\alpha\mu_0 n$.  For each $i$ define the
  event $A_i=\set{T_i\le 2\alpha\mu_0 n/\Delta}$.  First we prove that
  $\Pr_\mu(A_i) \ge 1/2$:
  \begin{align*}
    \Pr_\mu\set{T_i\le 2\alpha\mu_0 n/\Delta}
    &= 1 - \Pr_\mu\set{T_i > 2\alpha\mu_0 n/\Delta} \\
    &\ge 1-\frac{\Delta\Ex_\mu[T_i]}{2\alpha\mu_0 n} \\
    &\ge 1-\frac{\Ex_\mu[R_n]}{2\alpha\mu_0 n}
    \ge \frac{1}{2}\,,
  \end{align*}
  where we use the fact that
  \begin{align*}
    \Ex_\mu[R_n] &= n\mu_0-\Ex_\mu\bracket[\Big]{\sum_{t=1}^n X_{t,I_t}} \\
    &\le n\mu_0 - (1-\alpha)\mu_0 n = \alpha\mu_0 n.
  \end{align*}
  Next, we show that $\Pr_{\mu^{(i)}}(A_i) < 1/4e$:
  \begin{align*}
    \MoveEqLeft \Pr_{\mu^{(i)}}\set{T_i\le 2\alpha\mu_0 n/\Delta} \\
    &= \Pr_{\mu^{(i)}}\set{n-T_i\ge n-2\alpha\mu_0 n/\Delta} \\
    &\le \frac{\Ex_{\mu^{(i)}}[n-T_i]}{n - 2\alpha\mu_0 n/\Delta} \\
    &\le \frac{B}{\Delta n-2\alpha\mu_0 n} \\
    & = \frac{K}{(4e+2)K-(32e+16)\alpha^2 \mu_0^2 n}
    < \frac{1}{4e}\,.
  \end{align*}
  As in the other case, we have $\Ex_\mu[T_i]>1/4\Delta^2$ for each
  $i\in [K]$.  Therefore
  \begin{align*}
  \Ex_\mu[R_n] &= \Delta\sum_{i\in[K]}\Ex_\mu[T_i]
                  > \frac{K}{4\Delta} = \alpha\mu_0 n,
  \end{align*}
  which contradicts the fact that $\Ex_\mu[R_n] \le \alpha\mu_0 n$.
  So there does not exist an algorithm whose worst-case regret is
  smaller than $B$.

  To summarize, we proved that
  \begin{align*}
  \Ex_\mu R_n \ge
    \begin{cases}
      \dfrac{\sqrt{Kn}}{\sqrt{16e+8}}, \quad\text{when }
      \alpha \ge \dfrac{\sqrt{K}}{\mu_0 \sqrt{(16e+8)n}} \\
      \dfrac{K}{(16e+8)\alpha \mu_0}, \quad\text{otherwise,}
    \end{cases}
  \end{align*}
  finishing the proof.
\end{proof}


\end{document}